\documentclass[journal]{IEEEtran}

\usepackage{amsmath, amssymb, amsfonts, amsthm}
\usepackage{graphicx}
\usepackage{cite}
\usepackage{bbm}
\usepackage{mathtools}
\usepackage{microtype}
\usepackage{hyperref}

\usepackage{tikz}
\usetikzlibrary{arrows.meta,positioning,calc,fit}

 \newtheorem{theorem}{Theorem}
 \newtheorem{lemma}{Lemma}
 \newtheorem{definition}{Definition}
 \newtheorem{remark}{Remark}

\begin{document}


\title{The Query Channel: Information-Theoretic Limits of Masking-Based Explanations}


\author{
Erciyes Karakaya*, Ozgur Ercetin\textsuperscript{\dag}\\
*Department of Electrical and Computer Engineering, University of Maryland, College Park, USA\\
\textsuperscript{\dag}Faculty of Engineering and Natural Sciences, Sabanci University, Turkiye\\
}

\maketitle


\begin{abstract}
Masking-based post-hoc explanation methods, such as KernelSHAP and LIME, estimate local feature importance by querying a black-box model under randomized perturbations. This paper formulates this procedure as communication over a \emph{query channel}, where the latent explanation acts as a message and each masked evaluation is a channel use. Within this framework, the complexity of the explanation is captured by the entropy of the hypothesis class, while the query interface supplies information at a rate determined by an identification capacity per query. We derive a strong converse showing that, if the explanation rate exceeds this capacity, the probability of exact recovery necessarily converges to one in error for any sequence of explainers and decoders. We also prove an achievability result establishing that a sparse maximum-likelihood decoder attains reliable recovery when the rate lies below capacity. A Monte Carlo estimator of mutual information yields a nonasymptotic query benchmark that we use to compare optimal decoding with Lasso- and OLS-based procedures that mirror LIME and KernelSHAP. Experiments reveal a range of query budgets where information theory permits reliable explanations but standard convex surrogates still fail. Finally, we interpret super-pixel resolution as a source-coding choice that sets the entropy of the explanation and show how Gaussian noise and nonlinear curvature degrade the query channel, induce waterfall and error-floor behavior, and render high-resolution explanations unattainable.
\end{abstract}


\section{Introduction}

Masking-based post-hoc explanation methods, such as KernelSHAP and LIME, infer local feature importance by repeatedly querying a model under randomized binary masks. Although widely deployed, these procedures often show large variability with respect to the number of queries, the masking distribution, or the resolution of the feature representation. Existing explanations for this variability focus on algorithmic instability, regression ill-conditioning, or segmentation heuristics. However, these accounts remain tied to specific implementations and do not address a more fundamental question: \emph{Is reliable explanation possible given the query interface itself?}

This work develops an information-theoretic perspective in which explanation extraction is treated as communication over a noisy channel. The unknown explanation vector plays the role of a message, binary masks constitute channel inputs, and oracle evaluations $f(x^{(t)})$ serve as channel outputs. Within this model, the explainer acts as a decoder attempting to recover a sparse message from noisy linear projections. This framing enables a direct application of Shannon’s theory and, Wolfowitz’s strong converse: when the explanation rate exceeds channel capacity, \emph{no decoder, regardless of computational power, can reliably recover the explanation}.

Under this interpretation, the number of queries $T$ is the blocklength of the communication system, the sparsity $k$ and dimensionality $d$ determine the combinatorial entropy of the message, and the masking distribution defines the per-query mutual information of the channel. This yields a nonasymptotic, operational threshold $T_{\mathrm{IT}}$—the minimum number of queries required for reliable decoding—below which all algorithms must fail and above which an optimal decoder succeeds.

The information-theoretic approach further clarifies a long-standing practical issue in image explanations: finer super-pixel resolutions often degrade explanation quality even when the underlying model behaves smoothly. In our framework, increasing the number of segments increases the entropy of the explanation source, eventually exceeding the information that the query channel can convey under a fixed budget. This identifies a critical resolution beyond which explanations are mathematically impossible rather than algorithmically deficient.

Together, these results show that masking-based explainers are governed by the same rate--capacity tradeoffs as classical communication systems. Reliable explanation is achievable only when the entropy of the explanation source lies below the mutual information delivered by the query channel; when this inequality is violated, instability and error floors are unavoidable, regardless of algorithmic details.

\subsection{Related Works}

Our work bridges the gap between the information-theoretic limits of high-dimensional statistics and the reliability analysis of post-hoc explanation methods. We categorize the relevant literature into three primary domains.

\textbf{Information-Theoretic Limits of Sparse Recovery.}
The fundamental limits of recovering sparse vectors from noisy linear measurements are well-established in the field of Compressed Sensing (CS)~\cite{donoho2006cs, candes2006robust, donoho2009message}. While initial works focused on noiseless or bounded-error recovery, Wainwright~\cite{wainwright2009} rigorously proved that for the exact support recovery of a $k$-sparse vector in $d$ dimensions under Gaussian noise, the number of measurements must scale as $T > \mathcal{O}(k \log d)$. Reeves and Gastpar~\cite{reeves2012} extended this analysis to approximate recovery, deriving sharp phase transitions that separate achievable regions from impossible ones based on the Signal-to-Noise Ratio (SNR). Other works have refined these limits for specific matrix ensembles~\cite{wu2010optimal} and sparse measurement matrices~\cite{wang2010sparse}.

In the broader context of machine learning, information-theoretic tools have been applied to generalization bounds~\cite{xu2017information, achille2018information} and representation learning (e.g., the Information Bottleneck~\cite{tishby2000information}). However, these works typically focus on the sample complexity of \textit{learning} a model from data. Our work specifically targets the \textit{interpretation} of an existing model. We extend the classical CS results to the \textit{Active Query} setting unique to XAI, where the measurement matrix $Z$ is not a fixed Gaussian ensemble but is generated dynamically by the explanation algorithm's sampling policy (e.g., LIME's local perturbation or SHAP's axiomatic kernel).

\textbf{Mechanism and Instability of Post-hoc Explanations.}
Perturbation-based methods such as KernelSHAP~\cite{lundberg2017shap, strumbelj2014explaining} and LIME~\cite{ribeiro2016lime} operate by regressing model outputs on sampled binary masks. While powerful, these methods suffer from well-documented instability~\cite{alvarez2018, slack2020fooling, ghorbani2019fragile}, where small changes in inputs or sampling seeds yield vastly different explanations. Prior theoretical analyses have largely attributed this to optimization variance. For instance, Alvarez-Melis and Jaakkola~\cite{alvarez2018} formalized stability as Lipschitz continuity, identifying LIME's local linearity assumption as a source of violation. Garreau and von Luxburg~\cite{garreau2021explaining} derived asymptotic bounds for LIME's coefficients, focusing on the variance of the estimator under finite sampling. Other works have proposed evaluating fidelity via sanity checks~\cite{adebayo2018sanity}, sensitivity metrics~\cite{yeh2019infidelity}, or retraining benchmarks~\cite{hooker2019benchmark}, and improving estimation via unbiased estimators~\cite{covert2021improving}.

Our work offers a complementary perspective to these optimization-centric views: we reframe instability not as a failure of convergence, but as a \textit{channel outage} event. By viewing the sampling process as a noisy communication channel, we argue that instability arises fundamentally when the query budget $T$ is insufficient to convey the entropy of the explanation $H(S)$, regardless of the estimator's variance.

\textbf{Strong Converses in Channel Coding.}
Classical results by Shannon~\cite{shannon1948mathematical} and Wolfowitz~\cite{wolfowitz1957strong} establish that for communication rates exceeding channel capacity, the probability of decoding error converges to one exponentially fast. This phenomenon, known as the Strong Converse~\cite{cover2006elements}, provides a hard thermodynamic limit on recoverability. The present work demonstrates that masking-based explainers obey this same law. By adapting Wolfowitz’s theorem to the masked-query channel, we provide an operational meaning to the ``impossibility'' of explanation: if the entropy of the desired explanation resolution exceeds the information capacity of the query interface, reliable feature attribution is mathematically impossible.

\textbf{Recent work by Rao~\cite{rao2025limitsaiexplainabilityalgorithmic} }derives
fundamental limits on explainability using Algorithmic Information Theory,
showing that any explanatory model whose Kolmogorov complexity is
substantially smaller than that of the target predictor must incur non-negligible
approximation error (the \emph{complexity gap}). Their results characterize when
a valid explanation \emph{can exist} given the structural complexity of $f$.  

Our work addresses a different axis of limitation based on Shannon Information
Theory: even when a valid, low-complexity explanation \emph{does} exist, it may
not be \emph{recoverable} through the stochastic masking interface employed by
post-hoc methods such as LIME or KernelSHAP. The constraint in our setting is
not the Kolmogorov complexity of $g$, but the mutual information that the query
channel can convey about the latent explanation $\Phi$. By applying a strong
converse argument, we show that if the query budget lies below channel capacity,
reliable recovery is impossible regardless of the estimator, even when the true
explanation is sparse and perfectly well-defined.

\subsection{Contributions}

This paper provides a rigorous information-theoretic foundation for masking-based explanation methods. The main contributions are:

\begin{itemize}
    \item A formulation of explanation extraction as communication over a \emph{noisy query channel}, where explanation vectors act as messages and binary masks define channel inputs.
    \item A characterization of per-query capacity through mutual information $I(\Phi;Y \mid Z)$ and a corresponding explanation rate determined by the entropy of sparse supports.
    \item A strong converse theorem, adapted from Wolfowitz, proving that if the explanation rate exceeds channel capacity, no decoder can reliably recover the explanation regardless of computation.
    \item A Monte Carlo estimator that yields a nonasymptotic information limit $T_{\mathrm{IT}}$ and quantifies the gap between theory and the performance of LIME/Lasso and KernelSHAP/OLS.
    \item A source-coding interpretation of super-pixel resolution that predicts a critical segmentation beyond which explanations are information-theoretically infeasible.
\end{itemize}

\section{Background and Preliminaries}
\label{sec:background}

Masking-based explanation methods probe a black-box model through repeated evaluations under binary feature subsets. The purpose of this section is to establish a common technical foundation for analyzing these procedures through an information-theoretic lens. The focus is on the structure of the data generated by perturbed queries, the sparsity model governing the explanation, and the interpretation of the query mechanism as a channel.

\subsection{Perturbation-Based Explanations as Linear Inverse Problems}
\label{subsec:xai_sampling}

Local explanation methods such as KernelSHAP~\cite{lundberg2017shap} and LIME~\cite{ribeiro2016lime} operate by sampling binary masks around a fixed input $x \in \mathbb{R}^d$. For each query $t = 1,\ldots,T$, a mask $Z_t \in \{0,1\}^d$ is drawn from a sampling distribution~$\pi_Z$. The mask selects a subset of features from $x$, and the remaining coordinates are replaced by a baseline reference $x_{\mathrm{base}}$ (often zero or a dataset mean):
\begin{equation}
    x^{(t)} = x \odot Z_t + x_{\mathrm{base}} \odot (1 - Z_t),
    \label{eq:masked_input}
\end{equation}
where $\odot$ denotes elementwise multiplication. The black-box model is then evaluated on the perturbed sample, producing the scalar output
\[
    Y_t = f(x^{(t)}).
\]

These methods fit a local surrogate model to approximate the behavior of $f$ under such perturbations. A common representation is a linear function of the mask,
\begin{equation}
    g(Z_t) = \Phi_0 + Z_t^{\!\top}\Phi,
\end{equation}
where $\Phi \in \mathbb{R}^d$ defines the explanation and $\Phi_0$ is an offset. Estimating $\Phi$ from $\{(Z_t,Y_t)\}_{t=1}^T$ is a linear inverse problem: the masks act as measurement vectors and the outputs as observations. Differences between explanation methods arise from their sampling policy $\pi_Z$ and the loss or regularizer used during fitting (e.g., KernelSHAP’s weighted least squares or LIME’s sparse regression).

\subsection{Sparsity and the Support Recovery Objective}
\label{subsec:sparse_explanations}

Explanations are often expected to be sparse. Practical interpretability typically relies on identifying only a few influential features. We therefore assume that the latent explanation satisfies
\begin{equation}
    \|\Phi\|_0 = k \ll d,
\end{equation}
with support
\[
    S = \{ j : \Phi_j \neq 0 \}.
\]

The central inferential target is the support $S$, since it captures the discrete structure of the explanation. The goal of any explainer is to produce an estimate of this set; we denote such an estimate by $\widehat{S}$, obtained from the observations $\{(Z_t,Y_t)\}_{t=1}^T$ using any chosen decoding or regression procedure. Recovering $S$ from the query responses corresponds to a support recovery problem studied in high-dimensional statistics and information theory~\cite{wainwright2009}. 

To quantify reliability, we measure the probability that the estimated support differs from the true one,
\[
    P_e = \mathbb{P}(\widehat{S} \neq S).
\]
This metric depends on the explainer, the sampling strategy, and the underlying model, but the formulation itself does not assume any particular estimation method. It isolates the combinatorial component of the explanation—namely the identity of the active features—which is the part that can be characterized in terms of information content.

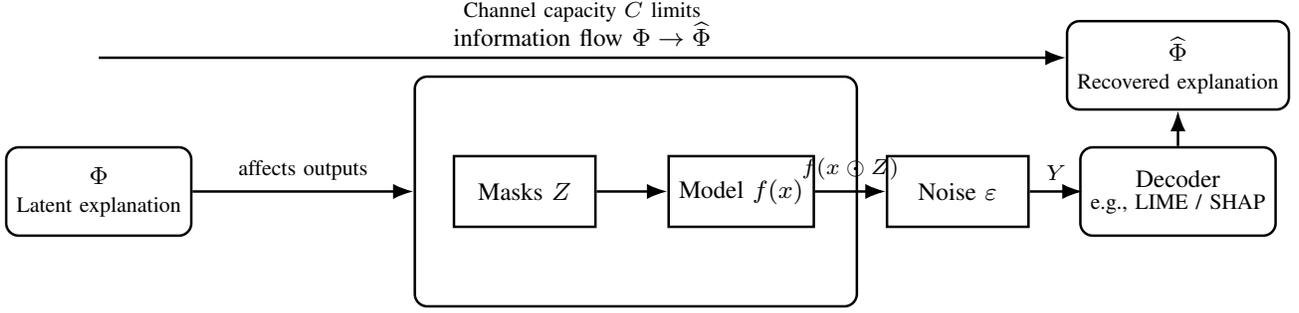
\begin{figure*}[t]
    \centering
    \resizebox{0.95\linewidth}{!}{%
    \begin{tikzpicture}[
        >=Latex,
        font=\footnotesize,
        block/.style = {draw, thick, rounded corners, align=center, minimum width=1.8cm, minimum height=1.0cm},
        op/.style    = {draw, thick, rectangle, align=center, minimum width=1.6cm, minimum height=0.8cm},
        line/.style  = {thick, -{Latex}},
        node distance = 1.6cm
    ]

    \node[block] (phi) {$\Phi$\\\scriptsize Latent explanation};

    \node[
        draw, thick, rounded corners,
        minimum width=5.0cm,
        minimum height=2.6cm,
        right=2.5cm of phi,
        align=center,
        inner sep=4pt
    ] (channelbox) {};


    \node[op] at ($(channelbox.west)!0.25!(channelbox.east)$) (masks) {Masks $Z$};
    \node[op, right=0.8cm of masks] (blackbox) {Model $f(x)$};
    \node[op, right=0.8cm of blackbox] (noise) {Noise $\varepsilon$};

    \node[block, right=2.5cm of channelbox] (decoder) {Decoder\\[-2pt]\scriptsize e.g., LIME / SHAP};
    \node[block, above=0.4cm of decoder] (phihat) {$\widehat{\Phi}$\\\scriptsize Recovered explanation};

    \draw[line] (phi) -- node[above, midway] {\scriptsize affects outputs} (channelbox.west);

    \draw[line] (masks) -- node[above, midway] {} (blackbox);
    \draw[line] (blackbox) -- node[above, midway] {\scriptsize $f(x \odot Z)$} (noise);

    \coordinate (chanOut) at ($(noise.east)+(0.6,0)$);
    \draw[line] (noise.east) -- node[above, midway] {\scriptsize $Y$} (chanOut);

    \draw[line] (decoder.north) -- (phihat.south);

    \coordinate (capL) at ($(phi.north)+(0,1.0)$);
    \coordinate (capR) at ($(phihat.west |- capL)$);
    \draw[line]
        (capL) --
        node[above, align=center] {\scriptsize Channel capacity $C$ limits\\information flow $\Phi \rightarrow \widehat{\Phi}$}
        (capR);

    \end{tikzpicture}
    }
    \caption{The Query Channel: The latent explanation $\Phi$ influences the model outputs observed under binary masks $Z$ through the noisy oracle, i.e., the black-box model, and noise, producing outputs from which a decoder attempts to reconstruct $\widehat{\Phi}$. The channel capacity $C$ fundamentally limits the information that can flow from $\Phi$ to $\widehat{\Phi}$.}
    \label{fig:concept_query_channel}
\end{figure*}

\subsection{The Query Process as a Communication Channel}
\label{subsec:query_channel_background}

The repeated perturbation–evaluation process can be viewed as a channel through which information about the latent explanation $\Phi$ is conveyed to the user. Each mask $Z_t$ plays the role of a channel input, and the corresponding model output $Y_t$ is the channel output. The pair $(Z_t,Y_t)$ depends on $\Phi$ through the masked input \eqref{eq:masked_input}. This induces a conditional distribution
\[
    P_{Y|Z,\Phi}(y\,|\,z,\phi),
\]
which describes how $\Phi$ influences the observable outputs for a given mask.

This interpretation does not assume that $f$ is linear. Any deviation of $f(x^{(t)})$ from a linear surrogate introduces a residual term, and any randomness in the model or numerical implementation introduces stochastic variation. Both effects increase the uncertainty in $Y_t$ given $(Z_t,\Phi)$ and therefore behave as noise from a channel perspective.

The masks are drawn from a sampling policy $\pi_Z$, which plays the role of an input distribution. Different explainers correspond to different policies; for example, LIME uses nearly uniform sampling, while KernelSHAP uses a distribution derived from cooperative game-theoretic axioms. The choice of $\pi_Z$ determines how informative each query is, but the structural form of the channel is the same across methods.

\subsection{Noise and Model Misfit}
\label{subsec:noise_background}

Two components contribute to uncertainty in the channel output:
\begin{enumerate}
    \item \textbf{Stochastic Noise ($\nu_t$):} Aleatoric uncertainty arising from the model itself (e.g., randomized smoothing, dropout layers) or the system (numerical precision).
    \item \textbf{Model Misfit / Interference ($\xi(x^{(t)}))$:} The deterministic residual $f(x^{(t)}) - g(Z_t)$ arising from the non-linearity of the black-box model. In the context of linear estimation, this curvature acts as \textit{structured interference}. Following standard Rate-Distortion practice \cite{cover2006elements}, we model this residual as additive Gaussian noise $\mathcal{N}(0, \sigma_{\text{misfit}}^2)$, which provides a worst-case lower bound on capacity.
\end{enumerate}

Both enter the channel model through the conditional distribution $P_{Y|Z,\Phi}$. For analytic convenience in later sections, these terms are often represented through an additive noise model
\begin{equation}
    Y_t = Z_t^\top \Phi + \underbrace{\xi(x^{(t)}) + \nu_t}_{\varepsilon_t},    \label{eq:additive_noise_background}
\end{equation}
with $\varepsilon_t$ capturing both stochastic and deterministic deviations.  This representation does not assume that the oracle is truly linear; it provides a tractable abstraction for quantifying the information content of the query responses.

\subsection{Information Measures}
\label{subsec:information_measures}

\begin{definition}[Entropy of the Support]
    The uncertainty in the $k$-sparse support is
\begin{equation}
    H(S)
    = \log_2 \binom{d}{k},
\end{equation}
measured in bits. 
\end{definition}
This quantity characterizes the combinatorial complexity of identifying which features matter. It reflects the ``information content'' of the explanation task, independent of any estimator.

\begin{definition}[Mutual Information]
    For the channel defined by $(Z_t,Y_t)$,
\begin{equation}
    I(\Phi;Y_t \mid Z_t)
\end{equation}
measures how informative a single query is about the explanation, conditioned on the chosen mask. Aggregating over $T$ queries yields
\begin{equation}
    I(\Phi;\mathbf{Y}^T \mid \mathbf{Z}^T),
\end{equation}
which quantifies the total information that the query process conveys about $\Phi$. 
\end{definition}
This quantity captures the ``information supplied'' by the interface.


\section{Information-Theoretic Limits of the Query Channel}
\label{sec:theory}

In this section we formalize the masking-based explanation problem as a communication problem over a noisy channel and state a strong converse theorem that governs the fundamental limits of recovery. Our treatment follows classical results of Wolfowitz~\cite{wolfowitz1957strong} but is adapted to the reversed roles of encoder and channel input that arise in post-hoc explainability.


\subsection{System Model}
\label{subsec:system_model}

To apply discrete coding theorems, we view the explanation recovery problem as determining which member of a discrete candidate set $\mathcal{M}$ generated the observations.

\begin{enumerate}
    \item \textbf{Message Set:} The explainer seeks to identify an index $M \in \{1, \dots, |\mathcal{M}|\}$. In the general case, $\mathcal{M}$ corresponds to a quantized lattice of explanation vectors. In the sparse case, $\mathcal{M}$ corresponds to the set of all possible feature support sets of size $k$. The entropy of the message is $H(M) = \log_2 |\mathcal{M}|$.
    
    \item \textbf{Query Strategy (Encoder):} A query strategy generates a sequence of binary masks $\mathbf{Z}^T = (Z_1, \dots, Z_T)$, where $Z_t \in \{0,1\}^d$. In non-adaptive methods (KernelSHAP, LIME), $Z_t$ are drawn i.i.d. from a design distribution $\pi_Z$.
    
    \item \textbf{Oracle (Channel):} Each query produces an observation $Y_t$ according to the channel law $P_{Y|Z,\Phi}$. We assume the channel is memoryless given the input and state:
    \begin{equation}
        P(\mathbf{Y}^T \mid \mathbf{Z}^T, \Phi) = \prod_{t=1}^T P(Y_t \mid Z_t, \Phi).
    \end{equation}
    
    \item \textbf{Decoder:} An estimator $\psi: \mathcal{Y}^T \times \mathcal{Z}^T \rightarrow \mathcal{M}$ produces a decoded estimate $\widehat{M}$. The probability of error is $P_e^{(T)} = \mathbb{P}(\widehat{M} \neq M)$.
\end{enumerate}


\subsection{Rate and Capacity}
\label{subsec:rate_capacity}

In the explanation setting, the unknown quantity is a latent parameter vector
$\Phi$ with a discrete component $M$ identifying the underlying hypothesis (for
example, a support set in the sparse case). Each explainer produces an
estimator $\widehat{M}$ based on a block of $T$ queries. To characterize the
fundamental limits of this identification problem, we introduce two quantities:
(i) the rate at which uncertainty about $M$ must be resolved, and (ii) the
maximum per-query information that the query channel can provide.

\begin{definition}[Explanation Rate]
\label{def:rate}
Let $M$ be a discrete random variable taking values in a finite set
$\mathcal{M}$ with entropy $H(M)$ bits. For a blocklength of $T$ queries, the 
\emph{explanation rate} is defined as
\begin{equation}
    R \triangleq \frac{H(M)}{T}
    \qquad \text{(bits per query)}.
\end{equation}
This quantity measures the average amount of information that must be acquired
per query in order to reliably determine the value of $M$.
\end{definition}

This definition does not assume a specific estimator or sampling policy; it
encapsulates the intrinsic information content of the identification problem
induced by the prior distribution of $M$.

\begin{definition}[Parameter-Identification Capacity]
\label{def:capacity}
For a single query using mask $Z \in \{0,1\}^d$, let the observable channel 
output be distributed according to $P_{Y|Z,\Phi}$. The 
\emph{parameter-identification capacity} of the query channel is defined as
\begin{equation}
    C^{(S)}
    \triangleq
    \sup_{\pi_Z}
    I(\Phi; Y \mid Z),
\end{equation}
where the supremum is taken over all input (mask) distributions $\pi_Z$ having 
full support on $\{0,1\}^d$. This quantity is the maximum mutual information 
between the latent parameter $\Phi$ and a single channel output $Y$ achievable 
under any admissible masking strategy.
\end{definition}

\begin{remark}
\textnormal{    The capacity defined above differs from Shannon’s classical channel    capacity, which concerns reliable transmission of arbitrary bit sequences. Here, the channel is used for \emph{parameter identification}: the latent parameter $\Phi$ induces the output distribution and must be inferred from observations.}
\end{remark}

\begin{remark}
    \textnormal{The supremum over $\pi_Z$ captures the fact that different explainer
    sampling policies correspond to different input distributions. Allowing all
    distributions with full support yields a channel property independent of
    any particular explainer.}
\end{remark}

\begin{remark}
    \textnormal{The quantity $I(\Phi;Y\mid Z)$ measures the information conveyed by a
    \emph{single} query. Over $T$ uses of a memoryless query channel,
    the maximum obtainable information grows at most linearly with $T$:}
    \[
        I(\Phi;\mathbf{Y}^T\mid \mathbf{Z}^T) \le T\, C^{(S)}.
    \]
\end{remark}


\subsection{Strong Converse Theorem}

The following result is a specialization of Wolfowitz’s strong converse for
discrete-time memoryless channels to the parameter-identification setting.

\begin{theorem}[Strong Converse for the Query Channel]
\label{thm:strongconverse}
Let $\mathcal{M}$ be a finite hypothesis set with $|\mathcal{M}|=2^{TR}$ and let
$M$ be uniformly distributed over $\mathcal{M}$. Consider any sequence of query
strategies $\pi$ (possibly adaptive) and any sequence of decoders $\psi$. If the
rate satisfies
\[
    R > C^{(S)},
\]
where $C^{(S)}$ is the parameter-identification capacity, then there exists a
constant $A>0$ (depending only on the channel law) such that the error
probabilities satisfy
\[
    P_e^{(T)} \ge 1 - e^{-AT},
\]
and consequently $\lim_{T\to\infty} P_e^{(T)} = 1$.
\end{theorem}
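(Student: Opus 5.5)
The plan is to use the standard meta-converse route behind Wolfowitz's theorem: compare the true joint law of $(M,\mathbf{Z}^T,\mathbf{Y}^T)$ with an auxiliary law in which the outputs are independent of $M$, then control the information density with a concentration bound. Fix a query strategy $\pi$ (possibly adaptive) and a decoder $\psi$, and write $\mathcal{D}_m=\{(\mathbf{z},\mathbf{y}):\psi(\mathbf{y},\mathbf{z})=m\}$ for the decoding regions. These regions are disjoint. As the auxiliary measure I will take
\[
Q(\mathbf{y}^T\mid \mathbf{z}^T)=\prod_{t=1}^T q^\star(y_t\mid z_t),
\]
where $q^\star(\cdot\mid z)$ is the capacity-achieving output law, i.e.\ the $Y$-marginal under the optimizing (or near-optimizing) prior on $\Phi$. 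The mask sequence is generated by the same adaptive policy, driven by past outputs. Under $Q$ the pair $(\mathbf{Z},\mathbf{Y})$ carries no information about $M$, so $\sum_m Q(\mathcal{D}_m)\le 1$. Consequently the average probability of correct decoding under $Q$ with $M$ uniform is at most $2^{-TR}$.

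Next I will use the change-of-measure inequality. For any threshold $\gamma$,
\[
1-P_e^{(T)}\le \mathbb{P}\!\left[i(M;\mathbf{Y}^T\|\mathbf{Z}^T)> T\gamma\right] + 2^{T\gamma}\,2^{-TR},
\]
where
\[
i=\sum_{t}\log\frac{P(Y_t\mid Z_t,\Phi_M)}{q^\star(Y_t\mid Z_t)}
\]
is the accumulated information density. I will choose $\gamma=(R+C^{(S)})/2$. With this choice the second term equals $2^{-T(R-C^{(S)})/2}$, which decays exponentially.

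For the first term I need the information density to concentrate around its mean. By the saddle-point (Kuhn--Tucker) property of $q^\star$, for every $\phi$ and every mask $z$ we have $D(P_{Y|z,\phi}\|q^\star_{\cdot|z})\le C^{(S)}$. Conditioned on the past, each increment therefore has mean at most $C^{(S)}$, and this holds even under adaptive mask selection. The centered sum is then a supermartingale. Assuming the log-likelihood ratios have a uniformly bounded moment generating function near zero (which holds for the Gaussian additive model \eqref{eq:additive_noise_background} with bounded $\Phi$), a Chernoff or Azuma--Freedman bound gives
\[
\mathbb{P}[i>T\gamma]\le e^{-A'T}.
\]
Here $A'$ depends only on the channel law and on the gap $R-C^{(S)}$. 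Combining the two terms yields $P_e^{(T)}\ge 1-e^{-AT}$ with $A=\min\{A',\tfrac{\ln 2}{2}(R-C^{(S)})\}$, and hence $P_e^{(T)}\to 1$.

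The main obstacle is the concentration step together with the uniform bound on relative entropy. The definition of $C^{(S)}$ takes a supremum over mask distributions with full support and a single-letter $I(\Phi;Y\mid Z)$. The argument, however, requires the per-letter bound $\sup_{\phi,z} D(P_{Y|z,\phi}\|q^\star)\le C^{(S)}$, which is what makes adaptivity harmless. My first attempt will be to show this via the minimax (redundancy--capacity) identity for each fixed $z$. I will treat $\Phi$ as ranging over the finite hypothesis set, so that the minimax theorem applies, and then take $q^\star(\cdot\mid z)$ to be the minimax output law. If this identity only yields $\sup_z C_z$ rather than $C^{(S)}$, I will restrict to the non-adaptive i.i.d.\ design, where the per-query averages match the definition. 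The second technical requirement, moment control of the log-likelihood ratios, I expect to hold under a Gaussian noise assumption with bounded $\|\Phi\|$.
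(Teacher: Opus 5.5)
\textbf{There is a genuine gap.} Your change-of-measure inequality, the choice $\gamma=(R+C)/2$, and the supermartingale Chernoff step are sound for mask policies that depend only on past masks and outputs. But everything rests on the uniform per-letter bound $\sup_{\phi,z}D(P_{Y|z,\phi}\,\|\,q^\star_z)\le C^{(S)}$, and this is exactly the step that fails.

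In Definition~\ref{def:capacity} the supremum runs over mask laws $\pi_Z$ only, with the prior on $\Phi$ held fixed (here the law of $\Phi(M)$ with $M$ uniform). So $C^{(S)}$ equals $\max_z I(\Phi;Y\mid Z=z)$ evaluated at that prior. The redundancy--capacity identity you plan to use gives a different quantity: $\min_q\sup_\phi D(P_{Y|z,\phi}\,\|\,q)=\sup_{P_\Phi}I(\Phi;Y\mid Z=z)=:C_z$. In general $C^{(S)}\le\max_z C_z$, and the inequality is strict. Your argument therefore proves the theorem with $C^{(S)}$ replaced by $\max_z C_z$. (Take that supremum over a fixed parameter set containing all hypotheses, not the $T$-dependent hypothesis set, so that $q^\star$, $\gamma$ and $A$ do not drift with $T$.) For $C^{(S)}<R\le\max_z C_z$ it yields nothing.

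Your fallback does not repair this. With i.i.d.\ masks and $q$ the actual output marginal, the fixed-prior $I(\Phi;Y\mid Z)$ controls only the \emph{average} over $m$ of the drifts $\mathbb{E}_Z D(P_{Y|Z,\phi_m}\,\|\,P_{Y|Z})$. A non-vanishing fraction of hypotheses may drift above $\gamma$, so $\mathbb{P}[i>T\gamma]$ need not vanish. You are left with a Fano-strength bound, and you have also given up adaptivity. Nor is this a matter of sharper concentration. For a general channel law, let hypothesis $m$ emit a private symbol $a_m$ with probability $c/T$ and a common symbol otherwise. Under the uniform prior, $I(M;Y)=cR<R$ bits per query for $c<1$, yet $P_e\to e^{-c}<1$.

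\textbf{Comparison with the paper.} The paper's proof takes a different route. Lemma~\ref{lem:MI_bound} bounds $I(M;Y^T,Z^T)\le TC^{(S)}$ by the chain rule and data processing. The exponential bound is then imported by citing Wolfowitz and Csisz\'ar--K\"orner through the operational capacity $C_{\mathrm{id}}$. A bound on averaged mutual information gives by itself only Fano's weak converse. So the paper does not supply the information-density control that your meta-converse correctly isolates as the crux. Your route is more transparent and self-contained and makes the exponent explicit, but it lands on $\max_z C_z$ rather than $C^{(S)}$.

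Two details in which your version is the more accurate one:
\begin{itemize}
\item Your $Q$ needs the policy to be independent of $M$ given the past. The appendix allows $\pi_t(z_t\mid m,z^{t-1},y^{t-1})$, under which the decoder can read $M$ directly off $\mathbf{Z}^T$ at any rate $R\le d$. This is also where the paper's Lemma~\ref{lem:MI_bound} discards $I(M;Z_t\mid Y^{t-1},Z^{t-1})\ge 0$ in the wrong direction.
\item The exponent $A$ must depend on $R-C$, not on the channel law alone.
\end{itemize}
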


\begin{IEEEproof}[Proof Sketch]
Fix a sequence of query strategies and decoders. For each $m\in\mathcal{M}$, the
query process induces a product distribution
\[
    P_{Y^T|M=m,Z^T} = \prod_{t=1}^T P_{Y|Z,\Phi(m)},
\]
where $\Phi(m)$ is the parameter associated with message $m$. Since the channel
is memoryless, the log-likelihood ratio between any two hypotheses decomposes
into a sum of $T$ i.i.d. terms.

Wolfowitz’s strong converse for memoryless channels (see \cite{wolfowitz1957strong})
implies that if the transmission rate exceeds the single-use capacity, then the
average error probability of any decoder satisfies
\[
    P_e^{(T)} \ge 1 - \exp(-AT),
\]
for some $A>0$. It remains to identify the relevant per-use capacity. For any
masking distribution $\pi_Z$,
\[
    I(M;Y_t \mid Z_t)
    \le I(\Phi;Y_t\mid Z_t),
\]
because $M \to \Phi \to (Z_t,Y_t)$ forms a Markov chain. Taking the supremum
over input distributions yields
\[
    I(M;Y_t \mid Z_t) \le C^{(S)}.
\]
Summing over $t=1,\ldots,T$ gives
\[
    I(M;Y^T\mid Z^T) \le T C^{(S)}.
\]
If $R > C^{(S)}$, then $H(M)=TR$ exceeds the maximum mutual information that the
channel can convey. Wolfowitz’s exponent bound therefore forces
$P_e^{(T)} \to 1$ as $T \to \infty$.
\end{IEEEproof}

The theorem states that the query interface can convey a finite information per query. If the explanation task requires more than $C^{(S)}$ bits per
query, then no sequence of sampling policies or decoders—adaptive or
non-adaptive, randomized or deterministic—can prevent the error probability from
converging to one. This result is purely information-theoretic: it does not
depend on computational constraints, convexity assumptions, or properties of a
particular explainer such as LIME or SHAP. The threshold $R=C^{(S)}$ therefore
marks a fundamental identifiability boundary for any masking-based explanation
scheme.


\subsection{Instantiation for KernelSHAP: Linear Gaussian Channel}
\label{subsec:kernelshap_instantiation}

The purpose of this subsection is to illustrate how the general strong converse
(Theorem~\ref{thm:strongconverse}) applies when the black-box model admits a
local linear approximation with additive Gaussian noise. This is the regime
explicitly assumed by KernelSHAP, where the surrogate is fit via weighted least
squares under binary masks.

We consider the linear Gaussian oracle model
\begin{equation}
    Y_t = z_t^\top \Phi + \varepsilon_t, 
    \qquad \varepsilon_t \sim \mathcal{N}(0,\sigma^2),
    \label{eq:LGM_formal}
\end{equation}
with $z_t\in\{0,1\}^d$. The latent parameter $\Phi$ is not assumed sparse in
this subsection; this corresponds to dense explanations as used in standard
KernelSHAP implementations.

\subsubsection{Per-Query Mutual Information}

Conditioned on a mask $z_t$, the query-channel output obeys
\[
    Y_t \mid (Z_t=z_t,\Phi=\phi)
    \sim \mathcal{N}(z_t^\top\phi, \sigma^2).
\]
Thus, for fixed $z_t$, the mutual information between $\Phi$ and $Y_t$ satisfies
\begin{equation}
    I(\Phi;Y_t \mid Z_t = z_t)
    = \frac{1}{2}\log_2\!\left(
        1 + \frac{\mathrm{Var}(z_t^\top \Phi)}{\sigma^2}
      \right),
    \label{eq:per_query_MI_dense}
\end{equation}
where the variance is taken with respect to the prior over $\Phi$. 

Since the explainer may choose any input distribution $\pi_Z$ over masks,
the parameter-identification capacity from Definition~\ref{def:capacity} becomes
\begin{equation}
    C^{(S)} 
    = \sup_{\pi_Z} 
      \mathbb{E}_{Z\sim\pi_Z}\!\left[
         I(\Phi;Y\mid Z)
      \right].
    \label{eq:Cs_dense_def}
\end{equation}
For convenience, we define the single-mask upper bound
\[
    C_{\max}
    \triangleq
    \max_{z\in\{0,1\}^d}
    I(\Phi;Y \mid Z=z),
\]
so that
\begin{equation}
    C^{(S)} \le C_{\max}.
    \label{eq:Cs_le_Cmax_dense}
\end{equation}

This inequality does not assert that $C_{\max}$ equals the operational
capacity; its purpose is to provide a tractable envelope that can be used to
derive scaling statements in the dense regime.

\subsubsection{Rate Requirement Under Dense Explanations}

Under dense explanations, the message effectively specifies all $d$ entries
of $\Phi$ to resolution $\delta$ over a bounded range $[-B,B]$. Standard
quantization arguments show that the hypothesis set can be chosen so that
\[
    H(M) \approx d \log_2\!\left(\frac{B}{\delta}\right).
\]
Substituting this into the rate definition $R = H(M)/T$ gives
\[
    R \approx \frac{d}{T}\log_2\!\left(\frac{B}{\delta}\right).
\]

By Theorem~\ref{thm:strongconverse}, reliable identification requires
$R < C^{(S)}$. Using the envelope \eqref{eq:Cs_le_Cmax_dense}, a necessary
condition for reliable recovery is
\[
    \frac{d}{T}\log_2\!\left(\frac{B}{\delta}\right)
    < C_{\max}.
\]
Equivalently,
\begin{equation}
    T > \frac{d}{C_{\max}}
          \log_2\!\left(\frac{B}{\delta}\right).
    \label{eq:dense_threshold_formal}
\end{equation}

Expression \eqref{eq:dense_threshold_formal} provides an information-theoretic
explanation for the observed scaling of KernelSHAP: estimating all $d$ entries
of a dense explanation requires a number of queries that grows at least
linearly with $d$. This lower bound arises from the finite per-query
information budget imposed by the query channel.

\subsection{Sparse Explanations and LIME}
\label{subsec:sparse_instantiation}

We now apply the strong converse to the sparse setting, where only the support
of $\Phi$ must be identified. This corresponds to the behavior of sparse
estimators used in LIME (e.g., Lasso), which select a small subset of active
features.

Let $\Phi$ be $k$-sparse in $d$ dimensions, and let $S\subseteq[d]$ with
$|S|=k$ denote its support. As shown in Section~\ref{subsec:information_measures},
the entropy of the support is
\begin{equation}
    H(S)
    = \log_2 \binom{d}{k}
    \approx k\log_2\!\left(\frac{d}{k}\right),
    \label{eq:HS_sparse_formal}
\end{equation}
when $d$ is large and 
$k\ll d$.

Using the rate definition $R = H(S)/T$, the explanation rate becomes
\begin{equation}
    R_{\mathrm{sparse}}
    \approx 
    \frac{k}{T}\log_2\!\left(\frac{d}{k}\right).
    \label{eq:R_sparse_formal}
\end{equation}

Applying Theorem~\ref{thm:strongconverse} with the upper bound
$C^{(S)}\le C_{\max}$ yields the necessary condition for reliable support
identification:
\begin{equation}
    \frac{k}{T}\log_2\!\left(\frac{d}{k}\right)
    < C_{\max},
\end{equation}
or equivalently,
\begin{equation}
    T > \frac{k}{C_{\max}}
          \log_2\!\left(\frac{d}{k}\right).
    \label{eq:sparse_threshold_formal}
\end{equation}

Expression \eqref{eq:sparse_threshold_formal} aligns with the well-known
$T = \Omega(k\log(d/k))$ scaling for support recovery obtained in high-dimensional
statistics and compressed sensing. The novelty here is that the result follows
directly from the information-theoretic structure of the query channel, without
requiring restricted isometries or linear-algebraic arguments.

In contrast to the dense case \eqref{eq:dense_threshold_formal}, sparse recovery
requires dramatically fewer queries when $k\ll d$. This difference explains why
LIME, which leverages sparsity through $\ell_1$ regularization, succeeds in
regimes where dense least-squares estimators such as KernelSHAP are
information-limited.

\section{Achievability: Existence of a Reliable Decoder Below Capacity}
\label{sec:achievability}

This section establishes the achievability counterpart to the strong converse proved in the previous Section. Specifically, we show that whenever the explanation rate is below channel capacity, there exists a decoder---namely the maximum-likelihood (ML) sparse decoder---whose probability of error vanishes as the number of queries grows.


\subsection{Random Coding Interpretation of Mask-Based Sampling}
\label{subsec:randomcoding}

Achievability proofs in channel coding rely on ensembles of randomly generated codebooks whose codewords are sufficiently separated to allow reliable identification under noise. In the explanation-channel setting, no explicit codebook is designed; instead, the sampling policy automatically induces a random codebook.

Let $Z \in \{0,1\}^{T \times d}$ be the mask matrix formed by $T$ i.i.d.\ draws from a full-support distribution $\pi_Z$ (e.g., Bernoulli$(0.5)$). For a $k$-sparse hypothesis with support $S$, define the corresponding noiseless projection (codeword) as:
\begin{equation}
    \mathbf{c}_S = Z \Phi_S \in \mathbb{R}^T.
\end{equation}
The collection $\mathcal{C} = \{\mathbf{c}_S : S \subseteq [d], |S| = k\}$ constitutes a random codebook whose elements must be distinguishable by a decoder observing the noisy vector:
\begin{equation}
    \mathbf{Y} = \mathbf{c}_S + \boldsymbol{\varepsilon}, \qquad \boldsymbol{\varepsilon} \sim \mathcal{N}(0,\sigma^2 I_T).
\end{equation}

The structural properties required for achievability follow from standard concentration of measure results (specifically the Johnson-Lindenstrauss lemma and the Restricted Isometry Property):

\begin{enumerate}
    \item \textbf{Separation of Competing Hypotheses.}
    For any two distinct supports $S \neq S'$, define the difference vector $\Delta = \Phi_S - \Phi_{S'}$. This vector is non-zero on the symmetric difference of the supports. The Euclidean distance between codewords is:
    \begin{equation}
        \| \mathbf{c}_S - \mathbf{c}_{S'} \|^2 = \| Z \Delta \|^2.
    \end{equation}
    Because the rows of $Z$ are i.i.d.\ sub-Gaussian, the quadratic form $\| Z \Delta \|^2$ concentrates sharply around its expectation. As $T \to \infty$:
    \begin{equation}
        \| Z \Delta \|^2 \approx T \cdot \mathbb{E}[(z^\top \Delta)^2].
    \end{equation}
    Thus, with probability approaching one, all competing hypotheses generate codewords separated by a distance scaling with $\sqrt{T}$. Crucially, this argument holds regardless of whether supports $S$ and $S'$ overlap; it depends solely on the non-zero energy of the difference vector $\Delta$.

    \item \textbf{Asymptotic Orthogonality of Noise.}
    The noise vector $\boldsymbol{\varepsilon}$ is isotropic and independent of $Z$. In high dimensions, the inner product between $\boldsymbol{\varepsilon}$ and any fixed signal direction concentrates near zero:
    \begin{equation}
        \langle \boldsymbol{\varepsilon}, Z\Delta \rangle = o\left( \| Z \Delta \| \right) \quad \text{with high probability}.
    \end{equation}
    Geometrically, this ensures that the received vector $\mathbf{Y}$ remains closer to the true codeword $\mathbf{c}_S$ than to any imposter $\mathbf{c}_{S'}$.
\end{enumerate}

These properties replicate the geometry of a classical random codebook. Since the signal separation grows as $\sqrt{T}$ while noise fluctuations remain bounded, a Maximum Likelihood decoder over the $k$-sparse hypothesis class achieves vanishing error probability provided the explanation rate is below capacity. This establishes the achievability component of the query-channel reliability theory.

\subsection{Optimal Decoder: Maximum-Likelihood Support Reconstruction}
\label{subsec:MLdecoder}

The optimal decoder for the sparse explanation channel is the Maximum
Likelihood (ML) estimator restricted to the set of $k$-sparse hypotheses.
Given a candidate support $S' \subseteq [d]$ with $|S'|=k$, define the
least-squares estimate of the amplitudes on $S'$ as
\begin{equation}
    \Phi_{S'}^{\mathrm{LS}}
    = \arg\min_{\Phi : \operatorname{supp}(\Phi)\subseteq S'}
        \| Y - Z\Phi \|_2^2.
    \label{eq:ls_support}
\end{equation}
The corresponding predicted signal is $Z \Phi_{S'}^{\mathrm{LS}}$.
Under the Linear Gaussian Model, the ML decision rule is
\begin{equation}
   \widehat{S}_{\mathrm{ML}}
   = \arg\min_{\substack{S' \subseteq [d] \\ |S'|=k}}
        \| Y - Z \Phi_{S'}^{\mathrm{LS}} \|_2^2.
   \label{eq:ML_decoder}
\end{equation}

This rule coincides with minimum-distance decoding on the induced random
codebook $\{\mathbf{c}_{S'} = Z\Phi_{S'}^{\mathrm{LS}}\}$: the decoder selects
the hypothesis whose induced codeword is closest to the received vector in
Euclidean distance. Although evaluating \eqref{eq:ML_decoder} requires a
combinatorial search over all $\binom{d}{k}$ supports, achievability results
permit unbounded computation and therefore focus solely on the
information-theoretic feasibility of reliable recovery.

\subsection{Achievability Theorem}
\label{subsec:achievability_theorem}

Let $H(S)=\log_2 \binom{d}{k}$ denote the entropy of the uniformly distributed
$k$-sparse support, and let $C$ denote the per-query capacity of the
explanation channel defined in Section~\ref{sec:theory}. The explanation rate is
\[
    R = \frac{H(S)}{T} \quad \text{bits per query}.
\]

\begin{theorem}[Achievability of Reliable Sparse Explanation Recovery]
\label{thm:achievability}
Consider the Linear Gaussian explanation channel \eqref{eq:LGM_formal}, and let
the mask rows $\{Z_t\}$ be drawn i.i.d.\ from a distribution with full support
on $\{0,1\}^d$ and sub-Gaussian coordinates (e.g., Bernoulli$(1/2)$). If the
explanation rate satisfies
\[
    R < C,
\]
then the Maximum Likelihood decoder \eqref{eq:ML_decoder} satisfies
\[
    \mathbb{P}(\widehat{S}_{\mathrm{ML}} \neq S)
        \longrightarrow 0
    \qquad \text{as } T \to \infty,
\]
with high probability over the draw of the mask matrix $Z$ and the channel
noise.
\end{theorem}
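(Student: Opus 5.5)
I would prove Theorem~\ref{thm:achievability} with the standard union-bound (random-coding) argument for sparse support recovery, organized by the overlap between the true and competing supports. Fix the true support $S$ with amplitudes $\Phi_S$ and suppose the nonzero amplitudes satisfy $|\Phi_j|\ge \phi_{\min}>0$. For any competitor $S'$ with $|S'|=k$ and $|S\setminus S'|=\ell\in\{1,\dots,k\}$, the ML rule \eqref{eq:ML_decoder} can prefer $S'$ only if
\[
\|P_{S'}^{\perp}Y\|_2^2 \le \|P_S^{\perp}Y\|_2^2 ,
\]
where $P_{S'}^{\perp}$ is the projection onto the orthogonal complement of the column span of $Z_{S'}$. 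Since $Y=Z_S\Phi_S+\boldsymbol{\varepsilon}$, we have $P_S^{\perp}Y=P_S^{\perp}\boldsymbol{\varepsilon}$ and $P_{S'}^{\perp}Y=P_{S'}^{\perp}Z_{S\setminus S'}\Phi_{S\setminus S'}+P_{S'}^{\perp}\boldsymbol{\varepsilon}$. The pairwise error event therefore splits into two pieces. The first is a signal term $\|P_{S'}^{\perp}Z_{S\setminus S'}\Phi_{S\setminus S'}\|^2$. The second is a noise difference: a cross term, which conditionally on $Z$ is Gaussian with variance $\sigma^2$ times the signal term, plus the difference of two $\chi^2$ quantities with at most $\ell$ degrees of freedom each.

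\textbf{Step 1 (signal energy).} Using the separation property of Section~\ref{subsec:randomcoding}, I would show that with probability $1-e^{-cT}$ over $Z$, uniformly over all pairs,
\[
\|P_{S'}^{\perp}Z_{S\setminus S'}\Phi_{S\setminus S'}\|^2 \ge T\,\lambda\,\|\Phi_{S\setminus S'}\|^2 \ge T\lambda\ell\,\phi_{\min}^2 .
\]
The ingredient is a restricted lower eigenvalue $\lambda>0$ of the centered Bernoulli covariance on $2k$ columns, and the union over $\binom{d}{2k}$ column sets is absorbed because $\log\binom{d}{2k}=O(H(S))=o(T)$ once $R$ is bounded. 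Centering matters here: the all-ones mean direction must be handled, for instance by including an intercept or by using $\mathrm{Cov}(Z)=\tfrac14 I$. \textbf{Step 2 (pairwise bound).} Conditionally on $Z$, Gaussian and $\chi^2$ tail bounds give a pairwise error probability of at most $\exp\bigl(-c' T\lambda\ell\phi_{\min}^2/\sigma^2\bigr)$. \textbf{Step 3 (union bound).} The number of competitors with overlap defect $\ell$ is $\binom{k}{\ell}\binom{d-k}{\ell}\le 2^{\ell\log_2(ek(d-k)/\ell^2)}$. Summing over $\ell$, the error probability tends to zero as long as $\ell\log_2(dk)\ll c'T\ell\lambda\phi_{\min}^2/\sigma^2$ for every $\ell$. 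The $\ell=k$ term is the one that matches $H(S)\approx k\log_2(d/k)$.

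\textbf{Main obstacle.} The union bound yields a sufficient condition of the form $H(S)/T < \kappa\cdot\tfrac12\log_2(1+\mathrm{SNR})$, with a constant $\kappa$ and a worst-case-$\ell$ factor. It does not directly give the sharp threshold $R<C$ with $C=C^{(S)}$. To match capacity, I would replace the crude Chernoff step with a Gallager-type bound $P_e\le \sum_\ell 2^{\rho\ell\log(dk)}\,\mathbb{E}\bigl[\cdots\bigr]^{\rho}$ optimized over $\rho\in[0,1]$, whose exponent is $T\bigl(E_0(\rho)-\rho R\bigr)$ with $E_0'(0)=I(\Phi;Y\mid Z)$. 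This is positive for small $\rho$ exactly when $R<I(\Phi;Y\mid Z)$, and one then takes $\pi_Z$ to approach the supremum defining $C^{(S)}$. The delicate point is the small-$\ell$ terms. Their count grows only like $(dk)^\ell$, but the information gained per competitor scales with $\ell/k$ of the total, so one must check that $\ell\log d/T\to0$ for fixed $\ell$. That requires either $k$ fixed with $d$ growing subexponentially in $T$, or an amplitude condition $\phi_{\min}^2\gtrsim\sigma^2\log d/T$. I would state this as the implicit regime, consistent with the paper's ``with high probability over $Z$'' phrasing.
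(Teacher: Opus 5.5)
Your Steps 1--3 follow the same random-coding template as the paper's outline: concentration of $\|Z\Delta\|^2$ around $T\,\mathbb{E}[(z^\top\Delta)^2]$, a union bound over $S'\neq S$, and exponentially small pairwise errors. You stall exactly where the paper's Step 3 merely asserts that standard error exponents give vanishing error iff $R<C$. What Steps 1--3 actually prove is a sufficient condition with unspecified constants under an amplitude floor.

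Step 1 is also mis-sized. For bounded $R>0$, $\log\binom{d}{2k}\approx 2RT$ is linear in $T$, not $o(T)$. So the uniform restricted-eigenvalue union bound carries its own rate cap, and it fails outright once $d>2^T$, where binary columns must collide. Averaging each conditional pairwise error over $Z$ avoids needing uniformity.

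The Gallager repair does not close the gap:
\begin{itemize}
\item The exponent $T(E_0(\rho)-\rho R)$ needs pairwise-independent codewords, but $Z\Phi_S$ and $Z\Phi_{S'}$ share the columns $Z_{S\cap S'}$. Run class by class, it yields conditions of the form $\frac1T\log_2\bigl[\binom{k}{\ell}\binom{d-k}{\ell}\bigr]<I_\ell$, with $I_\ell$ the per-query information about the $\ell$ displaced coordinates given the $k-\ell$ shared ones. This is a multiple-access-type region, and only the $\ell=k$ constraint resembles $R<C$.
\item You cannot let $\pi_Z$ approach the supremum in $C^{(S)}$, since the theorem fixes the mask law.
\item With continuous amplitudes fitted by least squares, $I(\Phi;Y\mid Z)$ counts amplitude information that does not accumulate across queries; in total $I(\Phi_S;Y^T\mid Z^T,S)=O(k\log T)$. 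So it is not the slope that governs identification of $S$.
\end{itemize}

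The condition you call the ``implicit regime'' is an additional hypothesis that $R<C$ does not imply. Without it the statement fails, so no argument can close the gap.

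Take $k=2$, Bernoulli$(1/2)$ masks, amplitude $a\gg\sigma$ on the smaller index of $S$ and $\epsilon<2\sigma$ on the larger. Then $C$ is essentially one bit. Yet a genie revealing the $a$-coordinate still leaves about $RT/2$ bits about the $\epsilon$-coordinate, and each query conveys at most $\frac12\log_2\bigl(1+\epsilon^2/(4\sigma^2)\bigr)$ of them. By Fano, the error stays bounded away from zero whenever $\log_2\bigl(1+\epsilon^2/(4\sigma^2)\bigr)<R<C$.

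Likewise, for $k=1$ there are only $2^T$ distinct mask columns. Once $R>1$ the true column is almost surely duplicated and the least-squares residuals tie, while $I(\Phi;Y\mid Z)\to\infty$ as $\sigma\to0$ under Gaussian amplitudes.

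In your other regime ($k$ fixed, $d$ subexponential in $T$), $R\to0$ and the claim is vacuous.

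What is provable is your class-by-class condition, with an explicit $\phi_{\min}$ hypothesis, a stated scaling of $(d,k,T)$, and information computed under the given mask law. The theorem has to be restated in that form rather than proved as stated.
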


This result is the direct analogue of Shannon’s Channel Coding Theorem: whenever
the rate lies below capacity, almost every realization of the random mask
ensemble yields a codebook for which ML decoding achieves vanishing error
probability. Combined with the strong converse, this establishes $R=C$ as the
fundamental boundary separating reliable and impossible explanation.

\subsection{Proof Outline}
\label{subsec:proof_outline}

The proof follows the standard random coding template. We summarize the main steps.

\paragraph*{1) Separation of hypotheses}
For any incorrect support $S' \neq S$, define the prediction error
\begin{equation}
    \Delta(S,S') 
    = \| Z(\Phi_S - \Phi_{S'}^{\mathrm{OLS}}) \|_2^2.
\end{equation}
Because the columns of $Z$ are random and independent across rows, the terms $Z(\Phi_S - \Phi_{S'})$ form sums of i.i.d.\ random variables. Standard concentration inequalities imply that for each fixed pair $(S,S')$,
\begin{equation}
    \frac{1}{T}\Delta(S,S') 
    \xrightarrow[T \to \infty]{\mathsf{P}}
    \mathbb{E}\big[(Z_1^\top(\Phi_S - \Phi_{S'}))^2\big].
\end{equation}
Moreover, with high probability, the similarity between distinct supports is bounded away from zero because $\Phi_S - \Phi_{S'}$ differs in at least one coordinate.

\paragraph*{2) Error event decomposition}
The ML decoder fails if there exists a false support $S'$ whose prediction error falls within the ``noise sphere'' surrounding the true signal $Z\Phi_S$. The probability of this event is bounded via a union bound over all $S' \neq S$:
\begin{equation}
    \mathbb{P}(\widehat{S}_{\mathrm{ML}} \neq S)
    \le \sum_{S' \neq S} 
    \mathbb{P}\!\left(
        \|Y - Z\Phi_{S'}^{\mathrm{OLS}}\|^2 
        \le
        \|Y - Z\Phi_S\|^2
    \right).
\end{equation}
Each term in the sum decays exponentially with $T$, with exponent governed by the divergence between the two Gaussian distributions $P(Y|S)$ and $P(Y|S')$.

\paragraph*{3) Capacity threshold}
For any $S' \neq S$, the relative entropy
\begin{equation}
    D\!\left(P(Y|S) \,\|\, P(Y|S')\right)
\end{equation}
is strictly positive. The aggregate number of hypotheses satisfies
\begin{equation}
    \log \binom{d}{k} = H(S) = RT.
\end{equation}
Using standard bounds for random coding error exponents, the union bound vanishes as $T\to\infty$ if and only if
\begin{equation}
    R < C,
\end{equation}
where $C$ is the per-query capacity of the channel, matching the characterization in Section~\ref{sec:theory}. This completes the proof.

\paragraph*{Remark (Posterior Concentration).}
Although the ML decoder establishes the information-theoretic limit, it is
computationally infeasible for large $d$. A useful conceptual interpretation is
that below capacity ($R<C$), the likelihood 
$\exp(-\|Y - Z\Phi_{S'}^{\mathrm{LS}}\|_2^2/(2\sigma^2))$ assigns 
exponentially higher weight to the true support than to competing hypotheses.
Thus, posterior sampling or approximate Bayesian procedures tend to 
concentrate around the true support when $R<C$ and become unstable or
multimodal when $R>C$. This provides intuition for why practical 
sampling-based optimizers exhibit a similar transition, even though the ML
decoder remains the formal benchmark.

\section{Sample Complexity of Explanation}
\label{sec:sample_complexity}

This section empirically evaluates the achievability threshold from
Theorem~\ref{thm:achievability} and compares it with the query budgets required
by practical decoders. The goal is to place the ML achievability transition,
the mutual-information benchmark $T_{\mathrm{IT}}$, and the behavior of
sparsity-aware and dense decoders on a single common scale.

\subsection{Experimental Setup}

We consider a low-dimensional configuration in which exhaustive ML decoding is
feasible. The explanation vector $\Phi \in \mathbb{R}^d$ is $k$-sparse, with
support $S$ drawn uniformly at random from all $\binom{d}{k}$ subsets. The
nonzero entries are independent $\mathcal{N}(0,1)$ and the remaining entries
are zero. Given a blocklength $T$, the query masks are drawn as
$Z_t \in \{0,1\}^d$ with i.i.d.\ Bernoulli components, and the oracle outputs
follow the Linear Gaussian Model
\[
    Y_t = Z_t^\top \Phi + \varepsilon_t, \qquad
    \varepsilon_t \sim \mathcal{N}(0,\sigma^2),
\]
independently across $t=1,\dots,T$.

The unified experiment uses $d = 12, k = 2,\sigma = 0.1,$
and evaluates a range of query budgets $T$. For each $T$ and each Monte Carlo trial, a fresh realization of
$(\Phi,\mathbf{Z}^T,\mathbf{Y}^T)$ is generated under this model.

Three decoders are applied to the same data:
\begin{enumerate}
    \item \textbf{Optimal ML decoder:} a combinatorial search over all supports
          of size $k$, solving a least-squares fit for each candidate support.
          This decoder attains the achievability limit in
          Theorem~\ref{thm:achievability}.
    \item \textbf{Lasso decoder (sparse):} an $\ell_1$-regularized regression
          on the masked design, returning the $k$ indices with largest absolute
          coefficients. This is the sparse decoder associated with LIME-type
          estimators.
    \item \textbf{OLS decoder (dense):} an ordinary least-squares regression
          on all $d$ features, followed by selecting the $k$ largest coefficients.
          This mirrors the dense regression step used in KernelSHAP.
\end{enumerate}

For each decoder we measure the empirical probability of exact support recovery
    $\Pr(\widehat{S} = S)$,
estimated over many independent draws of $(\Phi,Z^T,Y^T)$ at each $T$.


The information-theoretic query benchmark is defined by
\begin{equation}
    I(\mathbf{\Phi};\mathbf{Y}^T \mid \mathbf{Z}^T) \;\ge\; H(S),
    \label{eq:TIT_condition_unified}
\end{equation}
where the support entropy under the uniform-sparsity prior is
    $H(S) = \log_2 \binom{d}{k}$.
The sparse prior on $\mathbf{\Phi}$ does not yield a closed-form expression for
$I(\mathbf{\Phi};\mathbf{Y}^T \mid \mathbf{Z}^T)$, so we estimate it as $\widehat{I}_T$ via Monte Carlo integration using the
same oracle prior that generates the experiments\footnote{For each blocklength $T$:
\begin{enumerate}
    \item Draw $N_{\mathrm{outer}}$ independent triples
          $(\Phi_i, Z_i^T, Y_i^T)$ from the generative model.
    \item Pre-generate $N_{\mathrm{inner}}$ i.i.d.\ samples
          $\{\Phi_j\}_{j=1}^{N_{\mathrm{inner}}}$ from the same sparse prior.
\end{enumerate}

For each outer sample $i$, the conditional likelihood
$p(Y_i^T \mid Z_i^T,\Phi_i)$ under Gaussian noise has a closed-form expression.
The marginal $p(Y_i^T \mid Z_i^T)$ is approximated by averaging the same
Gaussian likelihood over the inner prior samples:
\[
    p(Y_i^T \mid Z_i^T)
    \approx
    \frac{1}{N_{\mathrm{inner}}}
    \sum_{j=1}^{N_{\mathrm{inner}}}
    p(Y_i^T \mid Z_i^T,\Phi_j).
\]

This yields the importance-sampling estimator
\begin{equation}
    \widehat{I}_T
    =
    \frac{1}{N_{\mathrm{outer}}}
    \sum_{i=1}^{N_{\mathrm{outer}}}
    \log_2
    \left(
        \frac{p(Y_i^T \mid Z_i^T,\Phi_i)}
             {\frac{1}{N_{\mathrm{inner}}} \sum_{j=1}^{N_{\mathrm{inner}}}
              p(Y_i^T \mid Z_i^T,\Phi_j)}
    \right).
    \label{eq:MI_estimator_unified}
\end{equation}
}.
The operational information threshold is then
\begin{equation}
    T_{\mathrm{IT}}
    \triangleq
    \min\{T : \widehat{I}_T \ge H(S)\}.
    \label{eq:TIT_def_unified}
\end{equation}


\begin{figure}[t]
    \centering
    \includegraphics[width=0.95\linewidth]{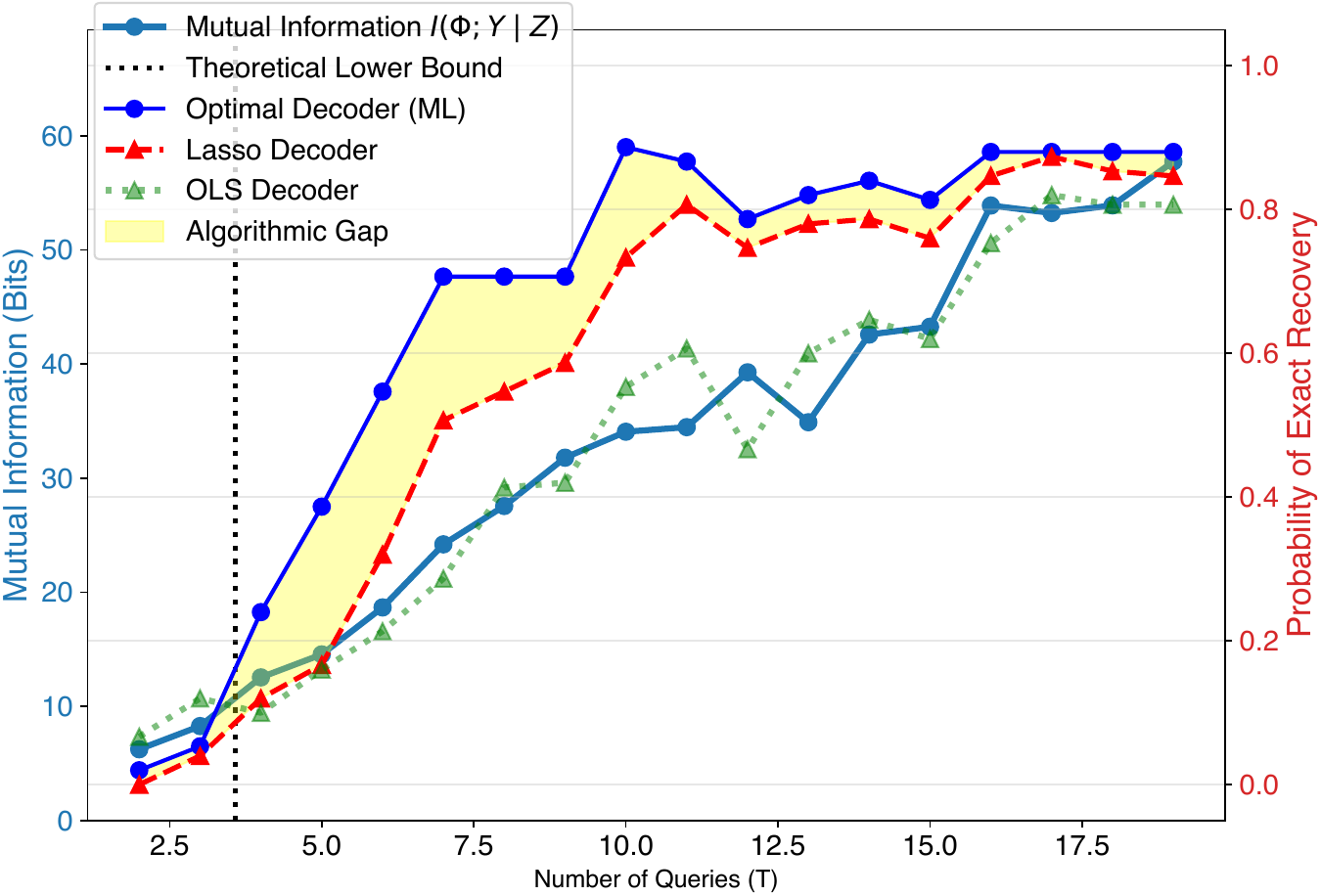}
    \caption{Unified information-theoretic and algorithmic behavior for the
    sparse explanation model with $d=12$, $k=2$, and $\sigma=0.1$.
    Left axis: Monte Carlo estimate of $I(\Phi;Y^T \mid Z^T)$ and a vertical
    line at $T = k \log(d/k)$, indicating the analytic lower bound for sparse
    recovery. Right axis: empirical probability of exact support recovery for
    the ML, Lasso, and OLS decoders as a function of the number of queries $T$.
    The horizontal separation between the ML and Lasso curves, highlighted by
    the shaded region, represents the algorithmic gap between the information
    limit and the performance of a convex surrogate decoder.}
    \label{fig:unified_achievability}
\end{figure}
\subsection{Achievability Transition and Algorithmic Gap}

Figure~\ref{fig:unified_achievability} summarizes the empirical behavior under this configuration. The left axis plots the estimated
mutual information $\widehat{I}_T$ against the number of queries $T$, together
with a vertical line at $T = k \log(d/k)$ marking the analytic lower bound for
sparse recovery. The curve $\widehat{I}_T$ grows with $T$ and crosses the
support entropy $H(S)$ near this scale, which identifies the information
benchmark $T_{\mathrm{IT}}$.

\begin{figure*}[t]
\centering
\includegraphics[width=0.9\linewidth]{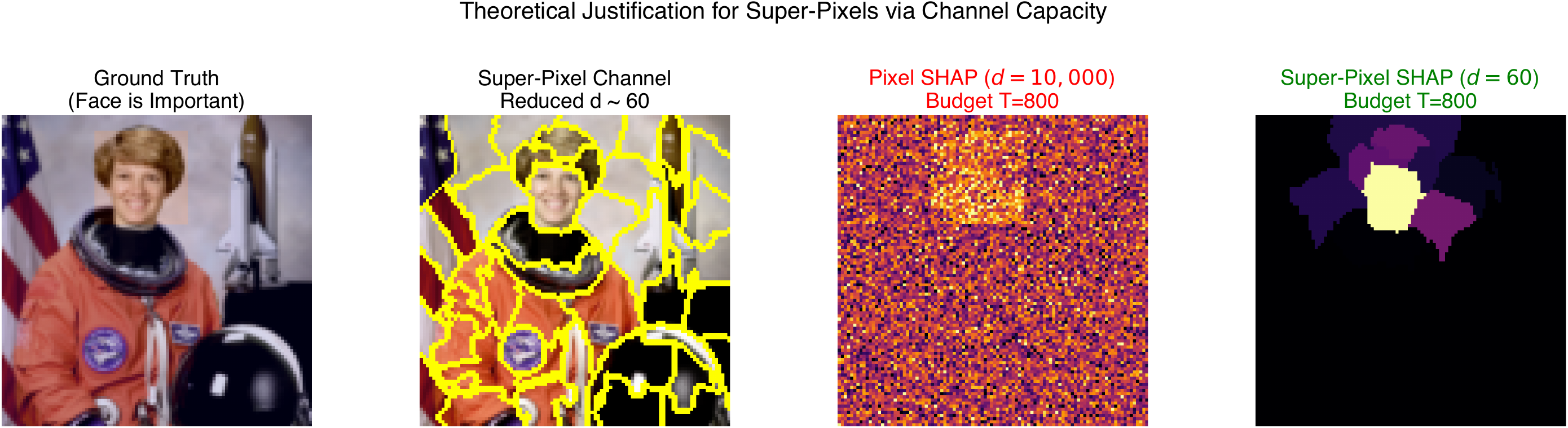}
\includegraphics[width=0.9\linewidth]{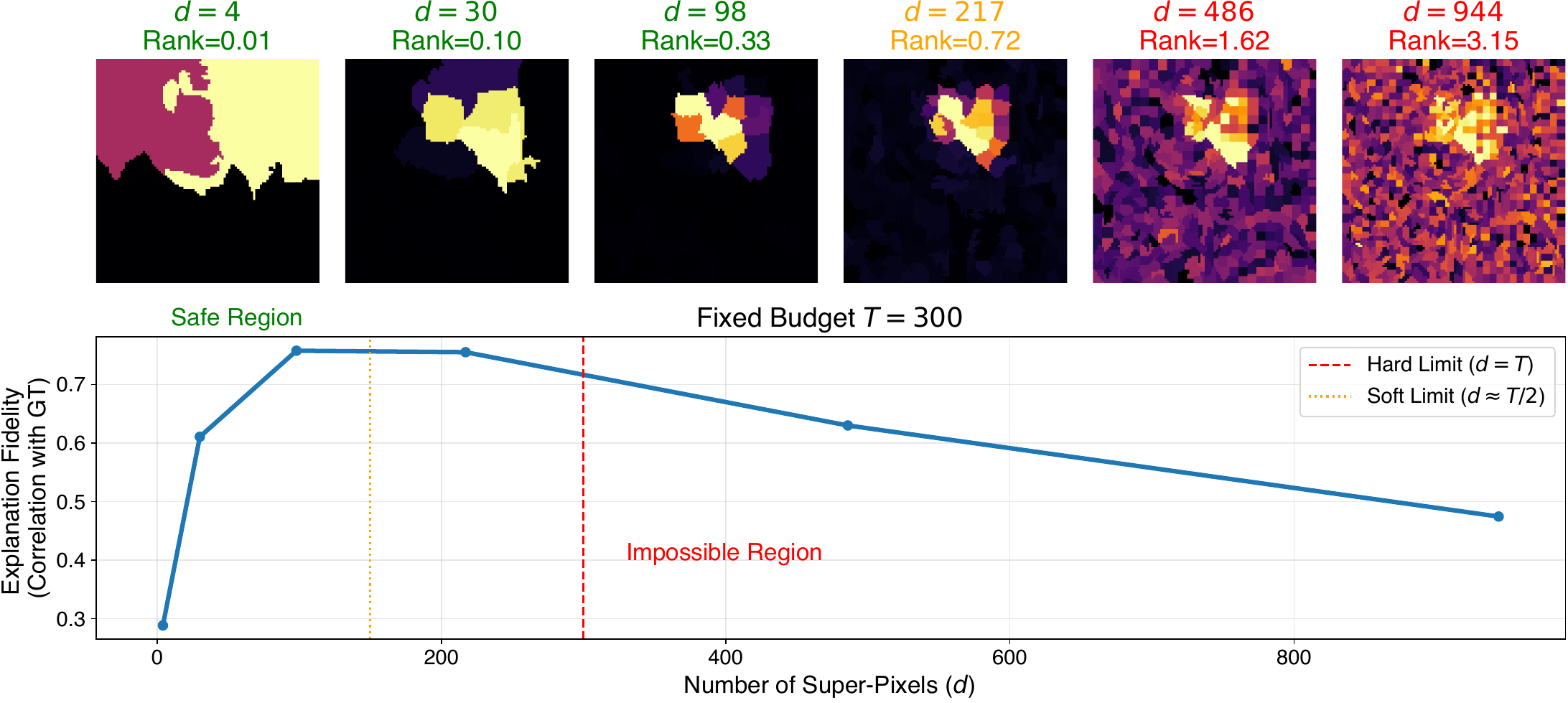}
\caption{Sample complexity at fixed query budget and varying resolution. Top: conceptual illustration comparing pixel-level SHAP at very high dimension $d\approx 10^4$ with a super-pixel channel at $d\approx 60$ under the same query budget $T=800$. Pixel-level explanations operate at a rate $R>C$ and are unrecoverable, while super-pixel explanations operate at $R<C$ and are recoverable. Bottom: quantitative capacity breakdown for a fixed budget $T=300$. The curve shows the correlation between the Lasso super-pixel explanation and the ground truth as a function of the number of super-pixels $d$. The safe region corresponds to $H(S)\le I(\Phi;Y^T \mid Z^T)$, whereas the impossible region corresponds to $H(S)>I(\Phi;Y^T \mid Z^T)$.}
\label{fig:capacity_breakdown}
\end{figure*}

The right axis shows the empirical probability of exact support recovery
$\Pr(\widehat{S}=S)$ for the three decoders. The ML decoder exhibits a sharp
transition from failure to success at query budgets close to
$T_{\mathrm{IT}}$, in line with Theorem~\ref{thm:achievability}. In contrast,
the Lasso and OLS decoders require more queries to reach high success
probability, with OLS performing worst due to its dense parameterization.

The region between the Lasso and ML curves in
Fig.~\ref{fig:unified_achievability} defines an \emph{algorithmic gap}: for
$T$ in this interval, information theory allows reliable recovery, but the
convex surrogate used by Lasso still fails with non-negligible probability.
This gap quantifies the relaxation penalty between the Shannon limit and
practical decoding in masking-based explanations. The OLS curve shows that
ignoring sparsity leads to even larger query budgets, consistent with the
ambient-dimension scaling of dense regression.

\subsection{Sample Complexity at Fixed Budget and Varying Resolution}

The previous experiment varied the number of queries $T$ at fixed dimension $d$.
To connect more directly with image-based explanations, we now fix the query
budget $T$ and vary the explanation resolution $d$ through the number of
super-pixels. In this regime, the question is how finely an image can be
segmented before a given query budget becomes insufficient for reliable
explanation.

We use a standard image (\texttt{scikit-image} ``Astronaut'') and define a
ground-truth salient region (the face) as a rectangular mask at the pixel
level. This mask induces a sparse support $S$ in the pixel domain: pixels in
the face region are relevant, all others are irrelevant. For each choice of
super-pixel resolution $d$, we apply a SLIC segmentation \cite{achanta2012slic} with $d$ regions and
map the pixel-level ground truth to a coarse explanation by averaging the
pixel-level importance within each super-pixel. This defines a $d$-dimensional
ground-truth super-pixel explanation.

For each value of $d$ we then:
\begin{enumerate}
    \item Generate $T$ binary masks over the $d$ super-pixels, forming a mask
          matrix $Z^T \in \{0,1\}^{T \times d}$.
    \item Map each super-pixel mask to the image domain and query the
          image-level oracle under these masked inputs to obtain $Y^T$.
          The oracle sums intensities within the face region and adds Gaussian
          noise, implementing a simple but controlled black-box model.
    \item Fit a linear regression decoder (Ridge) \cite{hoerl1970ridge} from the super-pixel masks to
          the oracle outputs, and project the recovered coefficients back to
          the image domain to obtain an estimated explanation map
          $\widehat{\Phi}$.
    \item Measure the explanation quality by the correlation between
          $\widehat{\Phi}$ and the ground-truth super-pixel explanation.
\end{enumerate}

The top panel of Fig.~\ref{fig:capacity_breakdown}  provides a conceptual illustration at a fixed
budget $T=800$. The first image shows the original astronaut with the face
region overlaid as the pixel-level ground truth. The second image shows a
super-pixel segmentation with $d \approx 60$, which compresses the explanation
into a much lower-dimensional representation. The third panel computes a
pixel-level SHAP-style explanation treating each pixel as a feature
($d \approx 10^4$) under the same budget $T=800$; in this regime, the rate
$R = H(S)/T$ exceeds the channel capacity, and the resulting heatmap fails to
localize the face. The fourth panel repeats the experiment using a
super-pixel SHAP channel with $d \approx 60$; here $R<C$, and the recovered
heatmap now aligns with the face region.

The middle panel of Fig.~\ref{fig:capacity_breakdown} makes this transition quantitative at a fixed
budget $T=300$. The top row within this panel shows reconstructed explanation
heatmaps for increasing numbers of super-pixels $d$. Each subplot is annotated
with the actual $d$ and the ratio $d/T$, which serves as an effective
resolution-to-budget ratio. For small $d$, the face region is
clearly visible and the decoder recovers a stable explanation. As $d$ grows,
the same budget is spread over more degrees of freedom, the ratio $d/T$
increases, and the reconstructed maps become noisy and diffuse.

The curve at the bottom of the panel plots the empirical correlation between
the decoded explanation and the ground truth as a function of $d$ for
$T=300$. Two vertical reference lines mark a ``soft'' limit near $d \approx
T/2$ and a ``hard'' limit near $d \approx T$. For $d$ well below $T/2$, the
correlation remains high and corresponds to
resolutions where the query channel can still support accurate explanations.
As $d$ approaches and exceeds these reference scales, the correlation drops
rapidly and eventually collapses toward zero. 

From the information-theoretic viewpoint, as the
super-pixel resolution increases, the support entropy $H(S)$ grows with $d$,
while the mutual information $I(\Phi;\mathbf{Y}^T \mid \mathbf{Z}^T)$ is bounded by the fixed
budget $T$ and noise level. There exists a critical resolution at which
$H(S)$ exceeds $I(\Phi;\mathbf{Y}^T \mid \mathbf{Z}^T)$, beyond which reliable explanation is
impossible regardless of the decoder. Section~\ref{sec:resolution_source_coding}
formalizes this effect as a source-coding problem and extends the analysis to
real-image tasks using the same Astronaut oracle.

\subsection{Empirical Behavior under Noise and Curvature}
\label{subsec:noise_curvature}

\begin{figure}[t]
    \centering
    \includegraphics[width=0.95\linewidth]{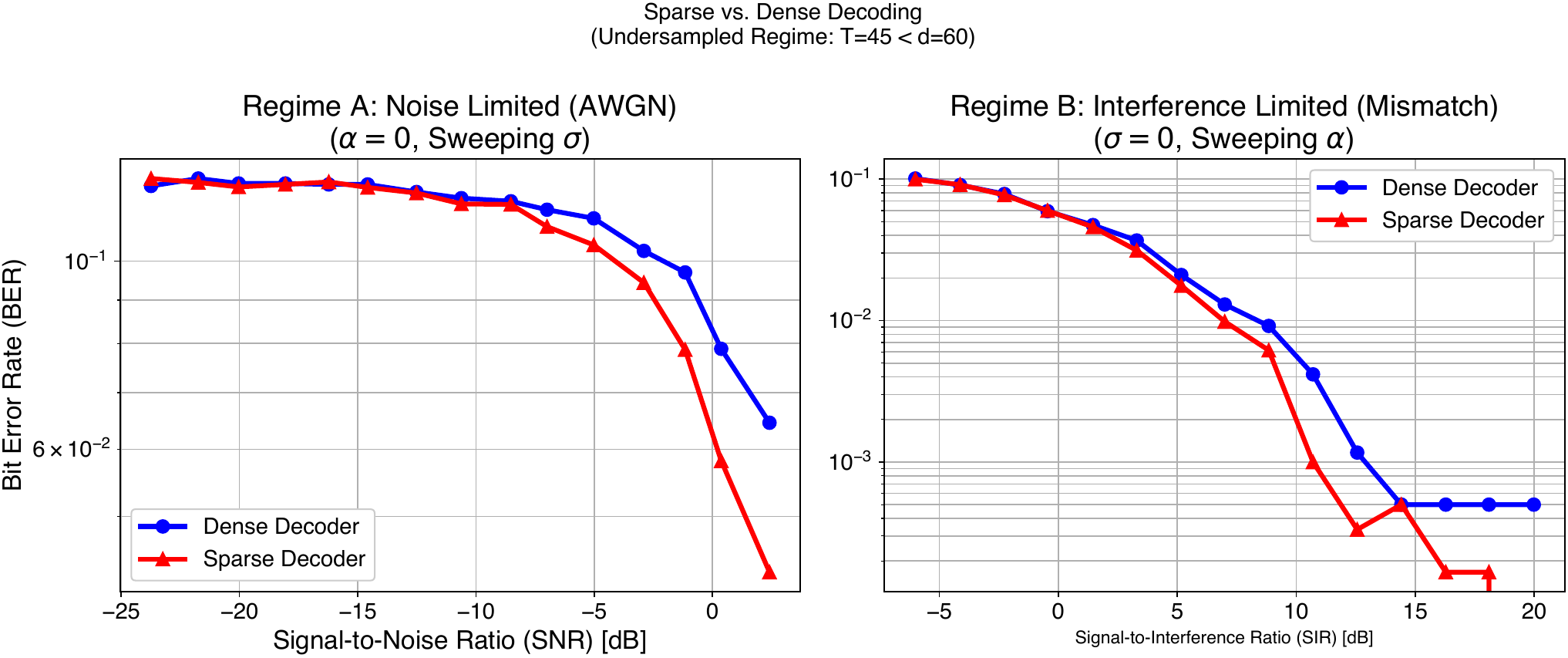}
    \caption{Block error probability for dense (Ridge / OLS proxy) and sparse (Lasso proxy) decoders in an undersampled regime $T<d$. Left: noise-limited regime with additive Gaussian noise and zero curvature. Right: interference-limited regime with deterministic quadratic curvature and zero noise. The horizontal axis is the empirical signal-to-distortion ratio (noise or interference).}
    \label{fig:waterfall}
\end{figure}

We now study how additive noise and model curvature affect support recovery when the number of queries $T$ is fixed below the ambient dimension $d$. The goal is to interpret the empirical “waterfall” and “error floor” behavior in Fig.~\ref{fig:waterfall} through the information-theoretic bound
\[
    I(\Phi;Y^T \mid Z^T) \;\ge\; H(S)
\]
that underlies Theorem~\ref{thm:strongconverse}. Throughout this subsection, the true explanation is represented by a sparse coefficient vector $\Phi^\star \in \mathbb{R}^d$ with support $S \subset [d]$, $\lvert S\rvert = k$, and nonzero entries $\pm 1$.

For each query index $t=1,\dots,T$, the explainer selects a binary mask $Z_t \in \{0,1\}^d$ and observes a scalar response
\begin{equation}
    Y_t
    = Z_t^\top \Phi^\star
      + \alpha\, Z_t^\top Q Z_t
      + \varepsilon_t,
    \label{eq:curved_channel}
\end{equation}
where $Q \in \mathbb{R}^{d \times d}$ is a fixed symmetric interaction matrix, $\alpha \ge 0$ is a curvature (interference) coefficient, and $\varepsilon_t \sim \mathcal{N}(0,\sigma^2)$ are i.i.d.\ Gaussian noise samples. The sequence $\{Z_t\}_{t=1}^T$ is drawn i.i.d.\ from a product distribution on $\{0,1\}^d$ that does not depend on $\Phi^\star$.

The linear-Gaussian model of Section~\ref{sec:background} is recovered as the special case $\alpha = 0$, $\sigma > 0$; however, when $\alpha>0$ and $\sigma=0$, the observation includes deterministic, mask-dependent interference:
\[
    Y_t = Z_t^\top \Phi^\star + \alpha\, Z_t^\top Q Z_t,
\]
and the conditional law of $Y_t$ given $(Z_t,\Phi^\star)$ is no longer Gaussian and no longer linear in $\Phi^\star$.

Given $T$ queries and responses $\{(Z_t,Y_t)\}_{t=1}^T$, we form two linear estimators:
a dense decoder $\widehat{\Phi}^{\mathrm{dense}}$ obtained by ridge regression and a sparse decoder $\widehat{\Phi}^{\mathrm{sparse}}$ obtained by Lasso. In both cases, the estimated support is defined by selecting the $k$ largest coordinates in magnitude:
\[
    \widehat{S} = \operatorname*{arg\,top\text{-}k}_{j \in [d]}
    \bigl\lvert \widehat{\Phi}_j \bigr\rvert.
\]
We measure performance via the block error probability $ P_{\mathrm{err}}
    \;=\;
    \mathbb{P}\bigl[\widehat{S} \neq S\bigr],$
estimated empirically by averaging over independent draws of $\Phi^\star$, the masks $\{Z_t\}$, and the noise $\{W_t\}$.

\paragraph*{Noise-limited regime}
In the first family of experiments (left panel of Fig.~\ref{fig:waterfall}), we fix $\alpha = 0$ and vary the noise variance $\sigma^2$. For each $\sigma^2$, we compute the empirical signal and noise powers
\[
    P_S = \frac{1}{T}\sum_{t=1}^T (Z_t^\top \Phi^\star)^2,
    \qquad
    P_N = \frac{1}{T}\sum_{t=1}^T \varepsilon_t^2,
\]
and report the block error probability as a function of the effective signal-to-noise ratio $\mathrm{SNR} = P_S/P_N$ (in dB). Both decoders display threshold behavior: as $\mathrm{SNR}$ increases, $P_{\mathrm{err}}$ drops rapidly from values near $1$ to values near $0$.

From an information-theoretic viewpoint, increasing $\sigma^2$ decreases the per-query mutual information $I(\Phi;Y_t \mid Z_t)$ and hence the total information $I(\Phi;\mathbf{Y}^T \mid \mathbf{Z}^T)$ for fixed $T$. For large noise, this quantity is below $H(S)$ and Theorem~\ref{thm:strongconverse} predicts high block error for any decoder, which is consistent with the plateau at $P_{\mathrm{err}}\approx 1$. As $\sigma^2$ decreases, $I(\Phi;\mathbf{Y}^T \mid \mathbf{Z}^T)$ increases and eventually exceeds $H(S)$; in this regime, sparse decoding reaches low error at a lower effective $\mathrm{SNR}$ than dense decoding, reflecting the fact that the sparse decoder targets the lower-entropy support $S$ rather than the full ambient vector.

\paragraph*{Interference-limited regime}
In the second family of experiments (right panel of Fig.~\ref{fig:waterfall}), we fix $\sigma = 0$ and vary the curvature coefficient $\alpha$. The interaction matrix $Q$ is scaled so that, for $\alpha=1$, the average interference power matches the average linear signal power. For each $\alpha$, we decompose the response into a linear component and a curvature-induced component,
\[
S_t = Z_t^\top \Phi^\star,
\qquad
I_t = \alpha, Z_t^\top Q Z_t,
\]
estimate empirical powers $P_S = \mathbb{E}[S_t^2]$ and $P_I = \mathbb{E}[I_t^2]$, and report an effective signal-to-interference ratio $\mathrm{SIR} = P_S/P_I$ on the horizontal axis (in dB). For small $\alpha$ (high $\mathrm{SIR}$), the channel is close to the linear model and the sparse decoder retains a clear block-error advantage over the dense decoder. As $\alpha$ increases and the quadratic term becomes comparable to, and eventually larger than, the linear term, both decoders develop an error floor: even at moderate or high $\mathrm{SIR}$, the block error probability does not approach zero.

In this regime, the dominant degradation mechanism is model mismatch rather than stochastic noise. The quadratic term $Z_t^\top Q Z_t$ induces structured, mask-dependent distortions that cannot be represented by any coefficient vector $\Phi$ in the linear surrogate class. Information-theoretically, this implies that $I(\Phi;Y^T \mid Z^T)$ saturates as the signal power $\lVert \Phi^\star \rVert_2^2$ increases, and can remain below $H(S)$ for the chosen $(d,k,T)$. The error floor in Fig.~\ref{fig:waterfall} is therefore consistent with Theorem~\ref{thm:strongconverse}: once $I(\Phi;Y^T \mid Z^T) < H(S)$, no linear decoder can drive $P_{\mathrm{err}}$ to zero, even if the nominal $\mathrm{SIR}$ (computed from $P_S/P_I$) is large.


\section{Resolution as Source Coding}
\label{sec:resolution_source_coding}

The preceding sections established the query interface as a communication channel with per-query capacity $C^{(S)}$ and showed that support recovery is feasible only when the explanation rate $R$ is strictly below this capacity. This section formalizes the dependence of $R$ on the \emph{resolution} used for image explanations and shows that super-pixel granularity induces a source-coding constraint that directly determines the feasibility of reliable recovery. We have also provided extension of resolution as source coding to text processing over tokenization from explainability lens.

Throughout, an image is partitioned into $d$ disjoint super-pixels, and the explanation is modeled as a $k$-sparse vector $\Phi \in \mathbb{R}^d$ with support $S \subseteq [d]$, $|S| = k$. The segmentation level $d$ defines the size of the discrete message alphabet and therefore the entropy that must be conveyed through the query channel.

\subsection{Source Entropy Induced by Resolution}

Under a uniform prior over all $k$-subsets of $[d]$, the discrete message is the support index $S$. Its entropy is
\begin{equation}
    H(S(d)) 
    = \log_2 \binom{d}{k}
    \approx k \log_2\!\left( \frac{d}{k} \right),
    \label{eq:HS_resolution_formal}
\end{equation}
which grows monotonically in $d$. For any fixed $k$, increasing resolution increases the cardinality of the support alphabet and therefore the number of bits that must be resolved by the decoder.

Given a query budget $T$, the induced explanation rate is
\begin{equation}
    R(d)
    \triangleq \frac{H(S(d))}{T}.
    \label{eq:rate_resolution_formal}
\end{equation}
This is the rate at which the support message must be transmitted per query.

\subsection{Information conveyed by the Query Channel}

Let $Y^T$ denote the $T$ oracle outputs generated under masks $Z^T$ drawn i.i.d. from a full-support design distribution. The mutual information
\begin{equation}
    I_T(d)
    \triangleq I(\Phi; Y^T \mid Z^T)
\end{equation}
quantifies the total information the query channel is capable of conveying. Because the number of queries $T$ and the mask distribution are fixed, $I_T(d)$ depends only on noise, model curvature, and mask–signal distinguishability, and is essentially constant in $d$ for typical image settings. Consequently, the denominator in \eqref{eq:rate_resolution_formal} is fixed while the numerator grows with $d$.

\subsection{Resolution-Constrained Feasibility}

The strong converse established in Theorem~\ref{thm:strongconverse} implies that recovery is impossible whenever the attempted transmission rate exceeds the capacity of the query channel:
\begin{equation}
    R(d) > C^{(S)}
    \quad \Longrightarrow \quad
    \lim_{T\to\infty} P_e^{(T)} = 1.
    \label{eq:resolution_converse_condition}
\end{equation}
Combining \eqref{eq:rate_resolution_formal} and \eqref{eq:resolution_converse_condition} yields the following resolution threshold.

\begin{definition}[Critical Resolution]
For fixed $(T, k)$ and channel capacity $C^{(S)}$, the \emph{critical resolution} is
\begin{equation}
    d_{\mathrm{crit}}
    \triangleq 
    \max\Big\{
        d \in \mathbb{N} \;:\; H(S(d)) \le T C^{(S)}
    \Big\}.
    \label{eq:dcrit_formal}
\end{equation}
\end{definition}

\begin{theorem}[Resolution-Constrained Strong Converse]
\label{thm:resolution_converse}
Let $d_{\mathrm{crit}}$ be given by \eqref{eq:dcrit_formal}.  
If $d > d_{\mathrm{crit}}$, then for any sequence of query strategies and any decoder,
\[
P_e^{(T)} \;\ge\; 1 - e^{-A T},
\]
for some constant $A>0$ depending only on the channel law. In particular,
\[
\lim_{T\to\infty} P_e^{(T)} = 1.
\]
Thus, reliable explanation at resolution $d$ is information-theoretically impossible.
\end{theorem}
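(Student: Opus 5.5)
The plan is to reduce the statement directly to Theorem~\ref{thm:strongconverse}, taking the support $S(d)$ as the message. First, fix the resolution $d$ and the sparsity $k$. The message set $\mathcal{M}$ is the collection of all $k$-subsets of $[d]$, with $M=S(d)$ uniform on it. Then $|\mathcal{M}|=\binom{d}{k}$, and $H(M)=H(S(d))=\log_2\binom{d}{k}$. The rate from Definition~\ref{def:rate} is $R(d)=H(S(d))/T$, as in \eqref{eq:rate_resolution_formal}. With this rate we have $|\mathcal{M}|=2^{TR(d)}$, so the hypothesis-set cardinality condition of Theorem~\ref{thm:strongconverse} holds exactly.

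Second, translate the hypothesis $d>d_{\mathrm{crit}}$ into a rate-versus-capacity inequality. The map $d\mapsto\log_2\binom{d}{k}$ is strictly increasing for $d\ge k$. So the set in \eqref{eq:dcrit_formal} is an initial segment of $\mathbb{N}$, and every $d>d_{\mathrm{crit}}$ lies outside it. That gives $H(S(d))>T\,C^{(S)}$, i.e. $R(d)>C^{(S)}$. This is exactly the hypothesis of Theorem~\ref{thm:strongconverse}. Invoking it for an arbitrary (possibly adaptive) sequence of query strategies and decoders yields $P_e^{(T)}\ge 1-e^{-AT}$, with $A>0$ depending only on the channel law. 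Letting $T\to\infty$ then gives the limit statement.

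The main obstacle is the asymptotics, and I would treat it explicitly. Theorem~\ref{thm:strongconverse} is a statement about a sequence indexed by $T$ at a fixed rate $R>C^{(S)}$. In \eqref{eq:dcrit_formal}, however, $d_{\mathrm{crit}}$ depends on $T$, and for a fixed $d$ the rate $R(d)=H(S(d))/T$ tends to zero as $T$ grows. So a literal ``fixed $d$, $T\to\infty$'' reading cannot keep $R(d)>C^{(S)}$. I would resolve this by interpreting the theorem along a sequence of resolutions $d_T>d_{\mathrm{crit}}(T)$ chosen so that $\liminf_T H(S(d_T))/T \ge R_0 > C^{(S)}$ for some fixed $R_0$. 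That is, the resolution grows with the budget at a rate strictly above capacity; I would state this as the standing interpretation.

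Along such a sequence, Theorem~\ref{thm:strongconverse} applies with rate $R_0$. If needed, I would first pass to a subfamily of $2^{\lfloor TR_0\rfloor}$ supports. Since removing hypotheses under a uniform prior can only make decoding easier up to the reweighting, the error lower bound transfers. The constant $A$ is then inherited from the Wolfowitz exponent evaluated at the gap $R_0-C^{(S)}$. If one insists on a nonasymptotic reading at a single $T$, I would fall back on Fano's inequality. Combining $H(S)\le I(S;Y^T\mid Z^T)+1+P_e\log_2\binom{d}{k}$ with $I\le T C^{(S)}$ gives $P_e\ge 1-(TC^{(S)}+1)/H(S(d))$, which already shows failure is bounded away from zero beyond $d_{\mathrm{crit}}$.
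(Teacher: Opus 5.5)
Your first two paragraphs are the paper's proof. The paper's proof is the single observation that the result follows from Theorem~\ref{thm:strongconverse} by substituting $R(d)=H(S(d))/T$ and noting that $R(d)>C^{(S)}$ exactly when $d>d_{\mathrm{crit}}$. You make the message set, the uniform prior and the cardinality $2^{TR(d)}$ explicit. For the direction you use, monotonicity is not even needed, since $d>d_{\mathrm{crit}}$ already puts $d$ outside the set in \eqref{eq:dcrit_formal}. Your objection to the literal statement is also right, and the paper does not address it. First, $d_{\mathrm{crit}}$ depends on $T$, and for fixed $d$ the rate $R(d)\to 0$. Second, Theorem~\ref{thm:strongconverse} concerns a sequence of codes at a fixed rate above $C^{(S)}$. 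It therefore gives no single-$T$ bound that is uniform over all $d>d_{\mathrm{crit}}(T)$, where the gap $R(d)-C^{(S)}$ can be tiny.

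The gap is in your repair. Along $d_T\to\infty$ the channel is not fixed: masks live in $\{0,1\}^{d_T}$ and $\Phi$ lives in $\mathbb{R}^{d_T}$. So $C^{(S)}=C^{(S)}(d_T)$ moves with $T$, and so does the constant $A$ of Theorem~\ref{thm:strongconverse}, which depends on the channel law. Hence ``Theorem~\ref{thm:strongconverse} applies with rate $R_0$'' is not a valid invocation. You need two things.

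(i) You need $R_0>\limsup_T C^{(S)}(d_T)$. This is available in the linear Gaussian model with independent zero-mean unit-variance amplitudes (as in Section~\ref{sec:sample_complexity}), since $I(\Phi;Y\mid Z=z)\le\tfrac12\log_2\bigl(1+\mathbb{E}[(z^\top\Phi)^2]/\sigma^2\bigr)\le\tfrac12\log_2(1+k/\sigma^2)$ uniformly in $d$.

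(ii) You need a strong-converse exponent that is uniform over this family of channels. Theorem~\ref{thm:strongconverse} does not supply it, and this is the genuinely missing ingredient.

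Your subfamily reduction is fine once you justify it by averaging the error over a uniformly random subfamily of size $2^{\lfloor TR_0\rfloor}$.

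The Fano fallback is overstated. The bound $P_e\ge 1-(TC^{(S)}+1)/H(S(d))$ is vacuous whenever $TC^{(S)}<H(S(d))\le TC^{(S)}+1$. So it shows failure bounded away from zero only once $H(S(d))$ exceeds $TC^{(S)}+1$ by a margin, not for every $d>d_{\mathrm{crit}}$. Moreover, Fano involves $I(S;Y^T,Z^T)$, which equals $I(S;Y^T\mid Z^T)$ only when the masks are independent of $S$.
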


The theorem follows directly from Theorem~\ref{thm:strongconverse} by substituting the rate $R(d)$ and observing that $R(d) > C^{(S)}$ exactly when $d > d_{\mathrm{crit}}$.

\subsection{Empirical Verification on Image-Based Explanations}

\begin{figure}[t]
    \centering
    \includegraphics[width=0.95\linewidth]{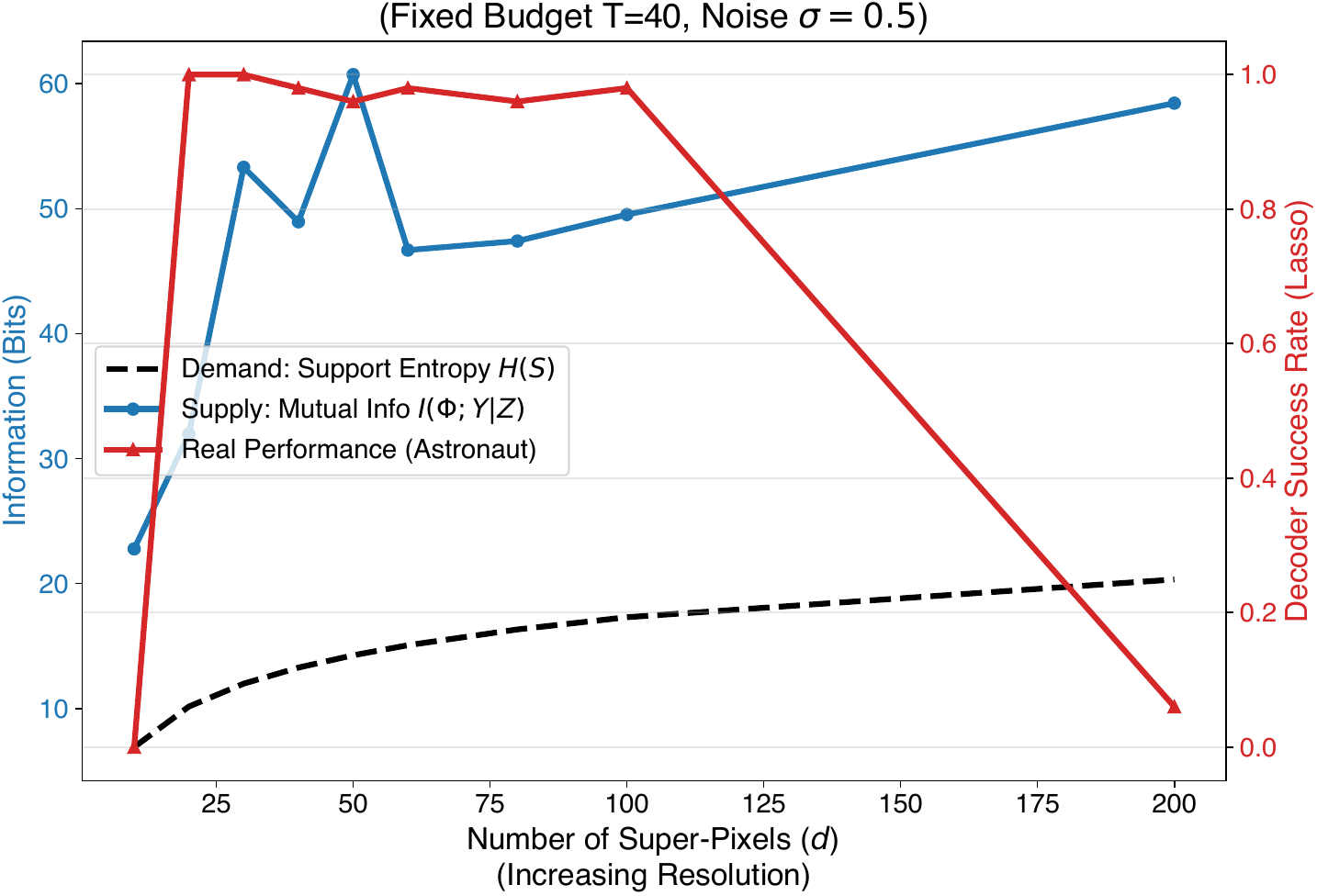}
    \caption{Empirical relationship between information content $H(S(d))$, information conveyed by the query channel $I_T(d)$, and decoder performance as a function of the number of super-pixels $d$.}
    \label{fig:superpixel_IT}
\end{figure}

This subsection examines whether the rate–capacity framework developed above accurately predicts the behavior of image-based explanations under varying resolution. The experiment uses the ``Astronaut'' image and applies a systematic sweep over the number of super-pixels $d$. For each value of $d$, the image is segmented into $d$ regions, and the ground-truth pixel-level explanation is aggregated onto this representation. A fixed number of queries $T$ is then applied, each consisting of a randomly drawn binary mask over the $d$ super-pixels. The oracle output for each mask is generated under the Linear Gaussian Model, and a Monte Carlo mutual-information estimator is used to compute $I_T(d)=I(\Phi;Y^T \mid Z^T)$ following the procedure described in Section~\ref{sec:sample_complexity}. A LIME/Lasso decoder is finally applied to obtain an estimated explanation, which is evaluated by correlation against the ground-truth super-pixel explanation.

Figure~\ref{fig:superpixel_IT} shows the resulting information accounting and algorithmic performance. The entropy $H(S(d))=\log_2\binom{d}{k}$ increases monotonically with $d$, reflecting the rising combinatorial complexity of identifying a $k$-sparse support in a $d$-dimensional representation. In contrast, the information supplied by the query channel, quantified by $I_T(d)$, remains approximately constant for fixed $T$, because the number of observations and the masking distribution are unchanged. The empirical correlation of the Lasso decoder with the ground truth follows these two curves closely: correlation remains high for all $d$ such that $H(S(d)) \le I_T(d)$ and deteriorates sharply once $H(S(d))$ exceeds $I_T(d)$.

This behavior matches the strong-converse threshold predicted by the theory. When $H(S(d)) > I_T(d)$, the explanation rate $R(d)=H(S(d))/T$ necessarily exceeds the effective capacity of the query channel. In this regime the strong converse guarantees that the probability of identifying the correct support converges to zero for any decoder, independent of its computational features or regularization strategy. Thus, the observed failure at higher resolutions is an information-theoretic inevitability rather than a consequence of estimator design.

Interpreting resolution through this lens formalizes its operational meaning. The choice of $d$ determines the cardinality of the support alphabet and therefore the number of bits that must be conveyed about the explanation. Increasing $d$ increases this informational requirement, and recovery remains feasible only while the channel can support a corresponding amount of information. The empirical transition in Fig.~\ref{fig:superpixel_IT} therefore constitutes a direct validation of the rate–capacity law for explanation channels.

\subsection{Tokenization as Source Coding and TokenSHAP}
The purpose of this subsection is not to reinterpret tokenization as an explanation problem. Rather, we argue that the source coding viewpoint developed for explanation resolution may extend more broadly to representation design in neural language models.
In neural language processing (NLP), model is considered as a function that predicts upcoming words, assign probability to entire sentence by using tokenizer and contextual representation of token embeddings. Tokenizer consist tuple of encoder, decoder and finite dictionary where encoder is splitting the input text into smaller units \cite{rajaraman2024analysis}. Thanks to encoding in the tokenizer, model is able to quantize the input and process them in a computational way properly. At this point, how to split input text, i.e segmentation, directly influence on model's performance either in downstream tasks or pretraining phase \cite{why_not_char},\cite{kudo2018sentencepiece}, \cite{byt5}. For example, oversegmentation of the text requires much more memory complexity and training time while the input is represented in a high grained-resoluted way \cite{bpe}, \cite{byt5}. On the other hand, undersegmentation is creating the scheme that tokens are becoming long and rare which brings huge loss, i.e low performance, to model. Even if low tokenizer fertility is expected as necessary condition in the NLP domain, we know that it is not a sufficient condition to be good tokenizer \cite{token_choice}. 

Analogous to the super-pixel resolution in masking-based explanations, tokenization determines the granularity of the representation and thereby the entropy of the queries for TokenSHAP \cite{horovicz2024tokenshap} algorithm. Under a fixed query budget, resolution can be interpreted as the segmentation granularity of the text, where increasing the number of tokens for texts, i.e oversegmentation, enlarges the coalition space that TokenSHAP must explore. From an explainability perspective, oversegmentation may create an impossible region similar to that shown in Fig.~\ref{fig:capacity_breakdown} for TokenSHAP, since the number of token subsets grows beyond what can be effectively explored under a fixed query budget. Since TokenSHAP aims to identify informative tokens for a given task through query-based attribution, excessively fine-grained tokenization may limit the recoverability of faithful token-level explanations. Consequently, increasing resolution leads to an increase in representational entropy that may eventually exceed the amount of budget that can be reliably processed by the model. This perspective suggests that tokenizer design choices, such as vocabulary size and segmentation granularity, may also be studied through the lens of explanation recoverability and query efficiency. A more rigorous empirical investigation of this connection is left for future work.

\section{Conclusion}
\label{sec:conclusion}

This paper develops a reliability theory for masking-based explanations by interpreting the interaction between an explainer and a black-box model as communication over a query channel. In this formulation, the explanation to be recovered plays the role of a message, the masks are channel inputs, and the oracle responses are channel outputs. The complexity of the explanation is quantified by the entropy of the underlying hypothesis set, while the query interface supplies information at most at a capacity per query. A strong converse adapted from channel coding shows that whenever the explanation rate exceeds this identification capacity, exact recovery of the explanation becomes impossible in the limit, regardless of how queries are chosen or how decoding is implemented. An accompanying achievability result demonstrates that a sparse maximum-likelihood decoder can attain vanishing error when the explanation rate is below capacity, establishing a sharp boundary between feasible and infeasible regimes for masking-based XAI.

To connect this theory to practice, we estimate the mutual information of the query channel via Monte Carlo methods and convert it into a finite-query benchmark for reliable explanation. In a controlled setting where exhaustive decoding is tractable, the block error probability of the optimal decoder exhibits a sharp transition near this benchmark, consistent with the theoretical threshold. When the same query data are decoded using Lasso- and OLS-based procedures that emulate LIME and KernelSHAP, the transition shifts to larger query budgets. This shift measures an algorithmic gap: a region where information theory admits reliable explanations but standard convex surrogates still fail to recover the correct support. Dense OLS decoding further requires queries that scale with the ambient dimension rather than the sparsity level, in line with the interpretation of dense regression as ignoring structure in the explanation.

The query-channel viewpoint also clarifies the role of resolution and non-idealities. For image-based explanations, the number of super-pixels determines the entropy of the explanation source and therefore the explanation rate for a fixed query budget. The strong converse then implies the existence of a critical resolution beyond which no decoder can reliably recover the explanation. Empirical super-pixel experiments match this prediction: below the critical segmentation level, sparse decoders produce stable, semantically aligned heatmaps, whereas above it the correlation with the ground truth collapses, even though the underlying model remains well behaved. Additive noise and nonlinear curvature appear as channel degradations that reduce mutual information, shift the waterfall transition, and induce error floors when the information delivered by the query channel falls below the entropy of the explanation.

Overall, the analysis yields operational design rules for masking-based XAI. For a given model and query mechanism, one can trade off query budget, sparsity level, and resolution to remain in a regime where reliable explanation is information-theoretically possible, and interpret instability or error floors as indicators of operating beyond capacity rather than as symptoms of poor optimization. Future work includes adaptive query policies that seek to maximize information per query under computational constraints, extensions to structured and group-sparse explanations, and applications of the query-channel formalism to other perturbation-based explanation methods and alternative query interfaces.

\appendices
\section{Proof of Theorem~\ref{thm:strongconverse}}
\label{app:strong_converse_proof}

This appendix provides a formal proof of Theorem~\ref{thm:strongconverse}. The
argument proceeds in three steps: (i) we restate the query-channel model as an
identification problem over a finite hypothesis set, (ii) we derive a
single-letter upper bound on the information that any sequence of query
strategies can extract, and (iii) we apply a classical strong converse for
discrete memoryless channels to conclude that error probability converges to
one whenever the rate exceeds the parameter-identification capacity
$C^{(S)}$.

\subsection{Model and Code Definition}
\label{app:subsec:model_code}

Let $\mathcal{M}$ be a finite hypothesis set with cardinality
\[
    |\mathcal{M}| = 2^{TR},
\]
and let $M$ be a random variable uniformly distributed over $\mathcal{M}$. Each
hypothesis $m \in \mathcal{M}$ is mapped to a latent parameter vector
$\Phi(m) \in \mathbb{R}^d$. In the sparse setting, $\Phi(m)$ is $k$-sparse and
encodes, among other things, a support set $S(m) \subseteq [d]$.

The query process runs for $T$ time steps. At time $t$, the explainer chooses a
mask $Z_t \in \{0,1\}^d$ according to a (possibly adaptive) query strategy,
which may depend on $M$ and all past observations. Formally, a query strategy
is a sequence of conditional distributions
\[
    \pi_t(z_t \mid m, z^{t-1}, y^{t-1}),
    \qquad t=1,\dots,T,
\]
where $z^{t-1}=(z_1,\dots,z_{t-1})$ and $y^{t-1}=(y_1,\dots,y_{t-1})$. The
observation at time $t$ is generated according to the query-channel law
\[
    Y_t \sim P_{Y|Z,\Phi}(\cdot \mid Z_t, \Phi(M)),
\]
and the joint distribution of $(M,Z^T,Y^T)$ is determined by the prior on $M$,
the strategy $\{\pi_t\}$, and the channel law $P_{Y|Z,\Phi}$.

At the end of $T$ queries, a decoder
\[
    \psi_T : \mathcal{Z}^T \times \mathcal{Y}^T \to \mathcal{M}
\]
produces an estimate $\widehat{M} = \psi_T(Z^T,Y^T)$ of the true hypothesis.
The error probability of this $(2^{TR},T)$ code is
\[
    P_e^{(T)} = \mathbb{P}(\widehat{M} \neq M).
\]
The explanation rate is
\[
    R = \frac{1}{T} \log_2 |\mathcal{M}|
      = \frac{H(M)}{T}
      \quad \text{bits/query},
\]
as in Definition~\ref{def:rate}. The parameter-identification capacity
$C^{(S)}$ is defined in Definition~\ref{def:capacity} as
\[
    C^{(S)} \triangleq \sup_{\pi_Z} I(\Phi;Y \mid Z),
\]
where $\pi_Z$ ranges over all mask distributions with full support on
$\{0,1\}^d$.

Theorem~\ref{thm:strongconverse} asserts that if $R>C^{(S)}$, then there exists
a constant $A>0$, independent of $T$, such that
\[
    P_e^{(T)} \ge 1 - e^{-AT},
\]
for all $(2^{TR},T)$ codes (i.e., for all sequences of strategies and
decoders), and hence $\lim_{T\to\infty} P_e^{(T)} = 1$.

\subsection{Information Bound for Arbitrary Query Strategies}
\label{app:subsec:info_bound}

The first step is to upper bound the mutual information between the hypothesis
$M$ and the observations $(Z^T,Y^T)$ in terms of $C^{(S)}$.

\begin{lemma}
\label{lem:MI_bound}
For any sequence of query strategies $\{\pi_t\}_{t=1}^T$ (possibly adaptive)
and any blocklength $T$,
\begin{equation}
    I(M;Y^T,Z^T) \le T\, C^{(S)}.
    \label{eq:MI_upper_bound}
\end{equation}
\end{lemma}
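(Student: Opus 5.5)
The proof I would give uses the chain rule for mutual information, one query at a time, and bounds each increment by a single-letter term $I(\Phi;Y_t\mid Z_t)$ evaluated under the marginal mask law induced at time $t$. Before starting, I would fix one modelling point that the statement needs. The strategy $\pi_t$ must depend on $M$ only through the past observations, i.e.\ $\pi_t(z_t\mid m,z^{t-1},y^{t-1})=\pi_t(z_t\mid z^{t-1},y^{t-1})$. This is the physically meaningful case, since the explainer does not know the explanation. If $\pi_t$ could depend on $m$ directly, the masks themselves could encode $M$, and $I(M;Z^T)$ alone could reach $\log_2|\mathcal{M}|=TR$, so the bound would fail. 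I would therefore read ``possibly adaptive'' as adaptive to $(z^{t-1},y^{t-1})$ and state this explicitly.

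First I would expand
\[
I(M;Y^T,Z^T)=\sum_{t=1}^T\Big[I(M;Z_t\mid Z^{t-1},Y^{t-1})+I(M;Y_t\mid Z^{t},Y^{t-1})\Big].
\]
The first term vanishes, because $Z_t$ is conditionally independent of $M$ given the past. For the second term I would use the memoryless channel law: given $(Z_t,\Phi(M))$, the output $Y_t$ is independent of $M$ and of the past. Writing everything as differences of conditional mutual informations, so as not to rely on differential entropies being finite, this gives
\[
I(M;Y_t\mid Z^t,Y^{t-1})\le I(\Phi;Y_t\mid Z^t,Y^{t-1})\le I(\Phi,Z^{t-1},Y^{t-1};Y_t\mid Z_t)=I(\Phi;Y_t\mid Z_t).
\]
The first inequality is data processing along $M\to\Phi\to Y_t$, conditional on the past. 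The middle inequality follows from the chain rule, because $I(Z^{t-1},Y^{t-1};Y_t\mid Z_t)\ge 0$. The last equality holds because $Y_t$ is independent of $(Z^{t-1},Y^{t-1})$ given $(Z_t,\Phi)$. Here the prior on $\Phi$ is the one induced by the uniform $M$, and $Z_t$ is distributed according to its induced marginal $\bar\pi_t$. Since $Z_t$ is independent of $M$ unconditionally as well, $(\Phi,Z_t)$ is a product, exactly as in Definition~\ref{def:capacity}.

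It remains to show that $I(\Phi;Y_t\mid Z_t)\le C^{(S)}$. The marginal $\bar\pi_t$ need not have full support, whereas Definition~\ref{def:capacity} takes the supremum only over full-support laws; I expect this step to be the main (if modest) obstacle. I would handle it through linearity. With $\Phi$ independent of $Z$, the conditional mutual information is $\sum_z\pi_Z(z)\,I(\Phi;Y\mid Z=z)$, which is linear in $\pi_Z$ over the finite alphabet $\{0,1\}^d$. The mixture $(1-\epsilon)\bar\pi_t+\epsilon\,\mathrm{Unif}$ has full support, and its value tends to that of $\bar\pi_t$ as $\epsilon\to0$. Hence $I(\Phi;Y_t\mid Z_t)\le C^{(S)}$; the same argument shows $C^{(S)}=C_{\max}$. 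Summing the resulting bound over $t=1,\dots,T$ yields \eqref{eq:MI_upper_bound}.
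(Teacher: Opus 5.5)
The gap is the sentence ``Since $Z_t$ is independent of $M$ unconditionally as well.'' Your restriction $\pi_t(z_t\mid z^{t-1},y^{t-1})$ gives $Z_t\perp M$ only conditionally on $(Z^{t-1},Y^{t-1})$. Once the strategy actually uses $y^{t-1}$, $Z_t$ is a function of $Y^{t-1}$, which depends on $\Phi(M)$, so $(\Phi,Z_t)$ is not a product. Then $I(\Phi;Y_t\mid Z_t)=\sum_z\mathbb{P}(Z_t=z)\,I(\Phi;Y\mid Z=z)$ is computed under the tilted laws $P_{\Phi\mid Z_t=z}$ rather than the prior. But $C^{(S)}$ fixes the prior and optimizes only the mask law. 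Concavity in the input law does not rescue this, because each mask $z$ is matched to the posterior it is used with.

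This is not merely a proof artifact: for a general channel law the inequality is false under feedback adaptivity. Take $\Phi=(B,U)$ with $B$ a fair bit and $U$ uniform on $K=16$ values. Let mask $a$ give $Y=B$. Let masks $c_0,c_1$ give $Y=U$ if $B=b$, and otherwise an independent uniform draw from the $K$ values (with $d=2$, give the fourth mask the law of $a$). Under the prior, $I(\Phi;Y\mid Z=a)=1$ and $I(\Phi;Y\mid Z=c_b)=\tfrac12\log_2 K=2$, so $C^{(S)}=2$. Yet querying $a$ and then $c_{Y_1}$ reveals $\Phi$ exactly. This gives $I(M;Y^2,Z^2)=1+\log_2K=5>2C^{(S)}$, with $I(\Phi;Y_2\mid Z_2)=4>C^{(S)}$ at $t=2$.

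Your other points are sound. The objection to $m$-dependent strategies is correct: the paper drops $I(M;Z_t\mid Y^{t-1},Z^{t-1})\ge 0$ as if that gave an upper bound, which is only valid when the term vanishes. Your linearity/mixture argument for the full-support restriction is also fine. But the paper's proof makes the same oversight you do, asserting $I(\Phi;Y_t\mid Z_t)\le C^{(S)}$ ``for any distribution of $Z_t$,'' so it does not close the gap either.

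What your argument actually proves is the lemma for non-adaptive strategies, i.e.\ $Z^T$ independent of $M$, which covers the i.i.d.\ masks of LIME and KernelSHAP. In that case $Z_t\perp(M,Y^{t-1})\mid Z^{t-1}$ kills the first chain-rule term, $(\Phi,Z_t)$ is a genuine product, and your remaining steps go through verbatim. To allow feedback-adaptive strategies you would have to replace $C^{(S)}$ by a prior-free constant such as $\max_z\sup_{P_\Phi}I(\Phi;Y\mid Z=z)$, with the supremum over all priors on the hypothesis set. That constant dominates every posterior-weighted term.
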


\begin{IEEEproof}
We start from the chain rule for mutual information:
\begin{align}
    I(M;Y^T,Z^T)
    &= \sum_{t=1}^T I(M;Y_t,Z_t \mid Y^{t-1},Z^{t-1}) \nonumber \\
    &= \sum_{t=1}^T \Big[
        I(M;Z_t \mid Y^{t-1},Z^{t-1})
        + I(M;Y_t \mid Y^{t-1},Z^t)
       \Big].
    \label{eq:chain_MI}
\end{align}

The first term in the brackets is nonnegative, so
\begin{equation}
    I(M;Y^T,Z^T)
    \le \sum_{t=1}^T I(M;Y_t \mid Y^{t-1},Z^t).
    \label{eq:drop_nonnegative}
\end{equation}
We now bound each summand.

Fix $t$ and condition on $(Y^{t-1},Z^t)=(y^{t-1},z^t)$. Given $M$ and $Z_t$,
the observation $Y_t$ is generated according to the single-use query-channel
law:
\[
    Y_t \sim P_{Y|Z,\Phi}(\cdot \mid Z_t, \Phi(M)).
\]
By the memorylessness of the channel, $Y_t$ is conditionally independent of
$(Y^{t-1},Z^{t-1})$ given $(M,Z_t)$, so the following Markov chain holds:
\[
    (M,Y^{t-1},Z^{t-1}) \;\longrightarrow\; (M,Z_t) \;\longrightarrow\; Y_t.
\]
Using data processing and the chain rule, we have
\begin{align}
    I(M;Y_t \mid Y^{t-1},Z^t)
    &\le I(M,Z_t;Y_t \mid Z_t) \nonumber \\
    &= I(M;Y_t \mid Z_t).
    \label{eq:dpi_step}
\end{align}

Next, the message $M$ determines the latent parameter $\Phi = \Phi(M)$, so
$M \to \Phi \to (Z_t,Y_t)$ forms a Markov chain given $Z_t$. In particular,
\[
    M \longrightarrow \Phi \longrightarrow Y_t
    \quad\text{given } Z_t,
\]
and by data processing,
\begin{equation}
    I(M;Y_t \mid Z_t)
    \le I(\Phi;Y_t \mid Z_t).
    \label{eq:Phi_dpi}
\end{equation}

Combining \eqref{eq:dpi_step} and \eqref{eq:Phi_dpi} yields
\begin{equation}
    I(M;Y_t \mid Y^{t-1},Z^t)
    \le I(\Phi;Y_t \mid Z_t).
    \label{eq:per_use_bound}
\end{equation}

By definition of $C^{(S)}$, for any distribution of $Z_t$,
\[
    I(\Phi;Y_t \mid Z_t) \le C^{(S)}.
\]
The query strategy determines the conditional distribution of $Z_t$ given
$(M,Y^{t-1},Z^{t-1})$, but the bound above holds for any such distribution,
since $C^{(S)}$ is a supremum over all input distributions with full support.

Substituting \eqref{eq:per_use_bound} into \eqref{eq:drop_nonnegative}, we
obtain
\[
    I(M;Y^T,Z^T)
    \le \sum_{t=1}^T C^{(S)}
    = T\, C^{(S)},
\]
which proves the lemma.
\end{IEEEproof}

Lemma~\ref{lem:MI_bound} shows that, regardless of how aggressively the explainer
adapts its queries, the total amount of information that can be conveyed about
$M$ through $T$ uses of the query channel is at most $T\,C^{(S)}$ bits.

\subsection{Operational Capacity and Strong Converse}
\label{app:subsec:operational_capacity}

We now relate the single-letter bound to an operational capacity and invoke a
strong converse result from classical channel coding theory.

\begin{definition}[Operational Identification Capacity]
\label{def:Cid}
For the query-channel model described above, define the \emph{operational
identification capacity} as
\begin{equation}
    C_{\mathrm{id}}
    \triangleq
    \sup_{\{\pi_t,\psi_T\}_{T\ge 1}}
    \left\{
        \limsup_{T\to\infty} \frac{1}{T}\log_2|\mathcal{M}_T|
        :
        \limsup_{T\to\infty} P_e^{(T)} < 1
    \right\},
    \label{eq:Cid_def}
\end{equation}
where $\mathcal{M}_T$ is the message set of the $(|\mathcal{M}_T|,T)$ code and
$P_e^{(T)}$ is its error probability. The supremum is taken over all sequences
of query strategies and decoders.
\end{definition}

By standard arguments (see, e.g., Wolfowitz~\cite{wolfowitz1957strong} or
Csisz\'ar and K\"orner), $C_{\mathrm{id}}$ can be expressed in terms of an
information quantity associated with the channel. In our setting, the key step
is to connect $C_{\mathrm{id}}$ and $C^{(S)}$.

\begin{lemma}
\label{lem:Cid_upper_bound}
The operational identification capacity satisfies
\begin{equation}
    C_{\mathrm{id}} \le C^{(S)}.
    \label{eq:Cid_leq_Cs}
\end{equation}
\end{lemma}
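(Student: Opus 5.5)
The plan is to show that any sequence of codes with rate strictly above $C^{(S)}$ has error probability tending to one. By Definition~\ref{def:Cid}, that forces every admissible rate to satisfy $\limsup_{T}\frac1T\log_2|\mathcal{M}_T|\le C^{(S)}$.

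Lemma~\ref{lem:MI_bound} plus Fano's inequality is not enough on its own. Fano only gives $(1-P_e^{(T)})\,TR \le 1 + T C^{(S)}$, i.e.\ $R\le C^{(S)}/(1-P_e^{(T)})$. That is a weak converse, and it does not exclude rates above $C^{(S)}$ with $P_e^{(T)}$ bounded away from one. I will therefore use a change-of-measure (meta-converse) argument at the level of information densities, and use Lemma~\ref{lem:MI_bound}'s single-letter structure to control their mean.

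\textbf{Step 1 (change of measure).} Fix an auxiliary output law $Q_{Y^T|Z^T}=\prod_t Q^\star(y_t\mid z_t)$. Here $Q^\star(\cdot\mid z)$ is the output distribution induced by a (near-)optimizing mask law in Definition~\ref{def:capacity}. For any decoder $\psi_T$ and any threshold $\gamma>0$, the standard Verd\'u--Han / Polyanskiy--Poor--Verd\'u inequality gives
\[
1-P_e^{(T)} \le \frac{\gamma}{|\mathcal{M}_T|} + \mathbb{P}\!\left[\imath_T > \log_2\gamma\right],
\qquad
\imath_T = \sum_{t=1}^T \log_2\frac{P_{Y|Z,\Phi}(Y_t\mid Z_t,\Phi(M))}{Q^\star(Y_t\mid Z_t)} .
\]
This holds because, under $Q$, the output is independent of $M$, so each decoding region captures mass at most $1/|\mathcal{M}_T|$ on average. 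Adaptivity of $\pi_t$ does not matter here: the factor $\pi_t(z_t\mid m,z^{t-1},y^{t-1})$ appears identically in $P$ and $Q$ and cancels in the likelihood ratio.

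\textbf{Step 2 (martingale concentration).} Write $\imath_T=\sum_t W_t$, and let $\mathcal{F}_{t-1}$ be generated by $(M,Z^{t},Y^{t-1})$. Then $\mathbb{E}[W_t\mid\mathcal{F}_{t-1}] = D\big(P_{Y|Z_t,\Phi(M)}\,\|\,Q^\star(\cdot\mid Z_t)\big)$. The key claim is that this conditional divergence is at most $C^{(S)}+\epsilon$ uniformly in $(m,z)$. This is the saddle-point (Kuhn--Tucker) property of a capacity-achieving output distribution, applied to the supremum over full-support mask laws. Under the Linear Gaussian law~\eqref{eq:LGM_formal} with bounded $\Phi$, the conditional variances $\mathrm{Var}(W_t\mid\mathcal{F}_{t-1})$ are also uniformly bounded by some $V<\infty$. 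Chebyshev's inequality for the martingale $\sum_t(W_t-\mathbb{E}[W_t\mid\mathcal{F}_{t-1}])$ then gives
\[
\mathbb{P}\big[\imath_T > T(C^{(S)}+2\epsilon)\big]\le V/(T\epsilon^2).
\]
Azuma's inequality under bounded increments would instead give an exponential rate, consistent with the $e^{-AT}$ form in Theorem~\ref{thm:strongconverse}.

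\textbf{Step 3 (conclusion).} Set $\log_2\gamma = T(C^{(S)}+2\epsilon)$. If $\frac1T\log_2|\mathcal{M}_T|\ge C^{(S)}+3\epsilon$ along a subsequence, then $1-P_e^{(T)}\le 2^{-T\epsilon}+V/(T\epsilon^2)\to0$ along that subsequence. This contradicts $\limsup P_e^{(T)}<1$. Since $\epsilon>0$ is arbitrary, $C_{\mathrm{id}}\le C^{(S)}$.

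The main obstacle is the uniform bound in Step 2. Lemma~\ref{lem:MI_bound} only controls the \emph{average} information $I(\Phi;Y\mid Z)$ under a given prior on $\Phi$, whereas the strong converse needs a per-hypothesis bound $\sup_{m,z} D(P_{Y|z,\Phi(m)}\|Q^\star_z)\le C^{(S)}$. I would first establish it through the minimax identity $\sup_{P}\, I=\min_Q\sup_{m,z} D(\cdot\|Q)$. If the prior on $\Phi$ is held fixed rather than optimized, the identity may fail. In that case I would restrict to the Gaussian instantiation and bound the divergence directly, using the explicit formula~\eqref{eq:per_query_MI_dense} together with a Gaussian choice of $Q^\star$.
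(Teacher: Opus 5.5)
\textbf{The gap.} The gap is the one you flag in Step~2, and neither of your proposed fixes closes it for the $C^{(S)}$ of Definition~\ref{def:capacity}.

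That definition optimizes over $\pi_Z$ only, with the prior on $\Phi$ held fixed. Since $I(\Phi;Y\mid Z)=\sum_z\pi_Z(z)\,I(\Phi;Y\mid Z=z)$ is linear in $\pi_Z$, the saddle-point condition over mask laws gives only $I(\Phi;Y\mid Z=z)\le C^{(S)}$ for each $z$. That is an average over $\Phi$, and it says nothing about $D(P_{Y\mid z,\Phi(m)}\|Q^\star_z)$ for an individual hypothesis.

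The minimax identity $\min_Q\sup_\phi D(P_{Y\mid z,\phi}\|Q)=\sup_{P_\Phi}I(\Phi;Y\mid Z=z)$ replaces $C^{(S)}$ by the capacity with the prior optimized. That quantity is generally strictly larger, and infinite if $\Phi$ ranges over $\mathbb{R}^d$.

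The Gaussian fallback fails by direct computation. Take a zero-mean prior, $v_z=\mathrm{Var}(z^\top\Phi)$ and $Q^\star_z=\mathcal{N}(0,\sigma^2+v_z)$. Then
\[
D\big(\mathcal{N}(z^\top\phi,\sigma^2)\,\big\|\,Q^\star_z\big)
=\tfrac12\log_2\!\Big(1+\frac{v_z}{\sigma^2}\Big)
+\frac{\big((z^\top\phi)^2-v_z\big)\log_2 e}{2(\sigma^2+v_z)},
\]
which exceeds \eqref{eq:per_query_MI_dense} whenever $(z^\top\phi)^2>v_z$ and is unbounded in $\phi$.

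\textbf{Why this is intrinsic.} The code's parameters $\Phi(m)$ are not tied to the prior. If they are unconstrained, place the values $z^\top\Phi(m)$ many multiples of $\sigma$ apart. A single query then identifies $2^{TR}$ hypotheses with small error at every rate, so no fixed-prior quantity can bound $C_{\mathrm{id}}$. With $\Phi(m)$ confined to a compact $\mathcal{K}$, your Steps~1--3 do go through with the minimax output law. But they prove $C_{\mathrm{id}}\le\max_z\sup_{P_\Phi\text{ on }\mathcal{K}}I(\Phi;Y\mid Z=z)$, not the lemma.

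\textbf{A second problem in Step~1.} The $1/|\mathcal{M}_T|$ bound requires the $Q$-law of $(Z^T,Y^T)$ to be the same for every $m$. This fails for the $m$-dependent policies $\pi_t(z_t\mid m,\dots)$ that the appendix permits, and you explicitly include them. For such policies the lemma is false outright: writing the bits of $m$ into the masks gives zero error at $d$ bits per query, because the decoder sees $Z^T$. You must restrict to $\pi_t(z_t\mid z^{t-1},y^{t-1})$.

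\textbf{A repair.} Assume that restriction and $\|\Phi(m)\|_2\le\rho$. Then your Step~1 finishes the proof once the product measure is replaced by the Bayes mixture $Q(y^T\mid z^T)=\int P(y^T\mid z^T,\phi)\,\mathcal{N}(\phi;0,\tau^2 I_d)\,d\phi$. Build it from sequential predictive densities, so adaptive masks are allowed. Gaussian conjugacy and $\operatorname{tr}(Z^\top Z)\le Td$ give the deterministic bound
\[
\imath_T\le\frac d2\log_2(1+\tau^2T/\sigma^2)+\frac{\rho^2\log_2 e}{2\tau^2}.
\]
Hence $1-P_e^{(T)}\le 2^{O(d\log T)-\log_2|\mathcal{M}_T|}\to0$ for every $R>0$. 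With $d$ fixed this gives $C_{\mathrm{id}}=0\le C^{(S)}$, and also the $1-e^{-AT}$ form of Theorem~\ref{thm:strongconverse} for every $R>0$. No concentration or saddle-point step is needed.

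\textbf{Comparison with the paper.} The paper's own proof does not supply the per-hypothesis control either. It pairs Lemma~\ref{lem:MI_bound} with a strong converse stated relative to $\frac1T I(M_T;Y^T,Z^T)$. That is exactly the averaged-information route you rightly reject as a weak converse. Consider a code that sends half its messages through a reliable rate-$R'$ code and collapses the other half onto one codeword. It has $R_T-\frac1T I(M_T;Y^T)\to R'/2$, yet $P_e^{(T)}\to\frac12$.
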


\begin{IEEEproof}
Consider any fixed sequence of codes indexed by $T$, with message sets
$\mathcal{M}_T$, priors uniform over $\mathcal{M}_T$, strategies
$\{\pi_t\}$, and decoders $\psi_T$. For blocklength $T$, the rate is
\[
    R_T = \frac{1}{T}\log_2 |\mathcal{M}_T|
        = \frac{H(M_T)}{T},
\]
where $M_T$ is the uniform message.

By Lemma~\ref{lem:MI_bound}, we have
\begin{equation}
    I(M_T;Y^T,Z^T) \le T\, C^{(S)}.
    \label{eq:I_bound_code}
\end{equation}
On the other hand, for any decoder $\psi_T$, the joint distribution of
$(M_T,\widehat{M}_T)$ satisfies the usual relations between error probability
and mutual information (via, e.g., the method of types and divergence
bounds). In particular, strong converse results for discrete memoryless
channels (such as Theorem 5.8.3 in \cite{csiszar2011information} or Wolfowitz’s
original theorem~\cite{wolfowitz1957strong}) imply that if
\[
    R_T > \frac{1}{T} I(M_T;Y^T,Z^T) + \delta,
\]
for some fixed $\delta>0$ and all sufficiently large $T$, then the error
probability $P_e^{(T)}$ must converge to one exponentially fast.

Combining with \eqref{eq:I_bound_code}, we obtain that for any $\epsilon>0$, if
\[
    R_T > C^{(S)} + \epsilon
\]
for all sufficiently large $T$, then $P_e^{(T)} \to 1$ and in fact
$P_e^{(T)} \ge 1 - e^{-A T}$ for some $A>0$. Therefore, any rate strictly
greater than $C^{(S)}$ is not achievable in the sense of
Definition~\ref{def:Cid}, which implies
$C_{\mathrm{id}} \le C^{(S)}$.
\end{IEEEproof}

Lemma~\ref{lem:Cid_upper_bound} shows that the single-letter quantity
$C^{(S)}$ provides an upper bound on the operational capacity. The converse
direction (that rates below $C^{(S)}$ are achievable) is addressed by the
achievability results in Section~\ref{sec:achievability} and is not needed for
the strong converse.

\subsection{Proof of Theorem~\ref{thm:strongconverse}}
\label{app:subsec:proof_main}

We now combine the lemmas above with a classical strong converse statement.

Fix a rate $R$ and consider any sequence of $(2^{TR},T)$ codes with query
strategies and decoders $(\{\pi_t^{(T)}\},\psi_T)$ achieving rate $R$ for each
blocklength $T$. Suppose that $R > C^{(S)}$.

By Lemma~\ref{lem:Cid_upper_bound}, the operational identification capacity
satisfies $C_{\mathrm{id}} \le C^{(S)}$. Hence $R > C_{\mathrm{id}}$. By the
strong converse for discrete memoryless channels (with or without feedback),
there exists a constant $A>0$ (depending only on the channel law) such that the
average error probability of any sequence of codes with rate $R>C_{\mathrm{id}}$
satisfies
\[
    P_e^{(T)} \ge 1 - e^{-A T}
\]
for all sufficiently large $T$. Since $R>C^{(S)}$ implies $R>C_{\mathrm{id}}$,
the same bound holds for any sequence of query strategies and decoders
operating at rate $R$.

\bibliographystyle{IEEEtran}
\bibliography{ref}

\end{document}